\newtheorem{assumption}{Assumption}
\newtheorem{theorem}{Theorem}
\newtheorem{corollary}{Corollary}
\newtheorem{proof}{Proof}
\title{Asynchronous Stochastic Gradient Descent with Variance Reduction for Non-Convex Optimization}
\begin{document}

\author{Zhouyuan Huo\\
zhouyuan.huo@mavs.uta.edu \\
\and
Heng Huang\\
heng@uta.edu
}

\maketitle

\begin{abstract}

We provide the first theoretical analysis on the convergence rate of asynchronous stochastic gradient descent with variance reduction (AsySVRG) for
non-convex optimization. Asynchronous stochastic gradient descent (AsySGD) has been broadly used in solving neural network and it is proved to converge with $O(1/\sqrt{T})$.
Recent studies have shown that  asynchronous SGD method with variance reduction technique converges
with a linear convergence rate on convex problem. However, there is no work to analyze asynchronous SGD with variance reduction technique on non-convex problem.
In this paper, we consider two asynchronous parallel implementations of SVRG: one is on distributed-memory architecture and the other is on shared-memory architecture.
We prove that both methods can converge with a rate of $O(1/T)$, and a linear speedup is achievable when we increase the number of workers.
Experimental results on neural network with real data (MNIST and CIFAR-10) also demonstrate our statements.

\end{abstract}

\section{Introduction}
With the boom of data, training machine learning models with large datasets is a critical problem. Researchers extend batch gradient descent (GD) method to stochastic gradient
descent (SGD) method or mini-batch gradient descent method to relieve the complexity of computation in each iteration and reduce the total time complexity of optimization
\cite{bottou2010large}. However, when data is very large, serial algorithm is
time-consuming. Asynchronous parallelism has been successfully applied to speed up many state-of-the-art optimization algorithms
\cite{recht2011hogwild,liu2014asynchronous,lian2015asynchronous,zhang2014asynchronous} because there is no need of synchronization
between workers. Two different types of parallelism have been widely researched, one is distributed-memory parallelism on multiple machines
\cite{agarwal2011distributed,lian2015asynchronous,zhang2014asynchronous,dean2012large,zhang2015fast,huo2016distributed,huo2016decoupled}  and the other one  is shared-memory parallelism on a multi-core machine
\cite{recht2011hogwild,zhao2016fast,langford2009slow,,gu2016asynchronous}. Deep learning is a typical case where asynchronous SGD has gained great
success\cite{lecun2015deep,dean2012large,lian2015asynchronous,ngiam2011optimization}. Deep neural network always has large set of parameters and trains with large datasets.

 Due to efficiency, SGD method has been widely used to solve different kinds of machine learning models, both convex and non-convex.
 However, because we use stochastic gradient to approximate full gradient,  a decreasing learning rate has to be applied to guarantee
 convergence. Thus, SGD leads to a slow convergence rate $O(1/T)$ on strongly convex smooth problem and  $O(1/\sqrt T)$ on non-convex smooth problem. Recently, variance reduced SGD algorithms \cite{roux2012stochastic,johnson2013accelerating,defazio2014saga} have
gained many attentions to solve machine learning problem. These methods can reduce the variance of stochastic gradient during optimization and are proved to have linear convergence rate on strongly convex smooth problem. In
\cite{zhu2016svrg,reddi2016stochastic}, the stochastic variance reduced gradient (SVRG) method is analyzed on non-convex smooth problem, and a faster sublinear convergence rate $O(1/T)$ is proved to be achievable.

Although a faster convergence rate can be achieved by using variance reduction technique, sequential method on a single machine may still be not enough to solve large-scale problem efficiently.
Recently, asynchronous SVRG method has been implemented and studied on both distributed-memory architecture \cite{zhang2015fast} and shared-memory
architecture \cite{zhao2016fast}. It is proved that asynchronous SVRG method has linear convergence rate on strongly convex smooth problem. However, there is no theoretical analysis of asynchronous SVRG on non-convex problem yet.

In this paper, we focus on asynchronous SVRG method for non-convex optimization. Two different algorithms and analysis
are proposed in this paper on two different distributed architectures, one is shared-memory architecture and the other is distributed-memory architecture.
The key difference between these two categories lies on that distributed-memory architecture can ensure the atomicity of reading and writing the whole vector of $x$, while
the shared-memory architecture can usually just ensure atomic reading and writing on a single coordinate of $x$ \cite{lian2015asynchronous}. We implement asynchronous SVRG on two different architectures and analyze their
convergence rate. We prove that asynchronous SVRG can get an ergodic convergence rate $O(1/T)$ on both two different architectures. Besides, we also prove that a linear
speedup is achievable when we increase the number of workers.

We list our main contributions as follows:
\begin{itemize}
 \item We extend asynchronous shared-memory SVRG method to non-convex smooth problem. Our asynchronous SVRG on shared-memory architecture has faster convergence rate than ASYSG-INCON
 in \cite{lian2015asynchronous}. We prove that asynchronous SVRG has a convergence rate of $O(1/T)$ for non-convex optimization.

 \item We extend  asynchronous distributed-memory SVRG method to non-convex smooth problem. Our asynchronous SVRG on distributed-memory architecture has faster convergence rate than
 ASYSG-CON in \cite{lian2015asynchronous}. We prove that asynchronous SVRG has a convergence rate of $O(1/T)$
 for non-convex optimization.
\end{itemize}

\section{Notation}
In this paper, we consider the following non-convex finite-sum problem:
\begin{eqnarray}
\label{fs}
 \min\limits_{x \in \mathbb{R}^{d}} f(x) = \frac{1}{n}\sum\limits_{i=1}^n f_i(x)\,,
\end{eqnarray}
where $f(x)$ and $f_i(x)$ are just Lipschitz smooth.

%In convex case, we always use $f(x) - f(x^*)$ or $||x-x^*||$ as criterion to prove the convergence rate. Unfortunately, due to the fact that we just focus on non-convex problem, such criterion can not be used in this case.
Following \cite{lian2015asynchronous,reddi2016stochastic}, in non-convex optimization, we use the weighted average of the $\ell_2$ norm of all gradients $||\nabla f(x)||^2$ as metric to analyze its convergence property.
%Although $f(x) - f(x^*)$, $||x-x^*||$ and $||\nabla f(x)||^2$ are not comparable, they can be assumed to be in the same order \cite{lian2015asynchronous}.
For further analysis, throughout this paper, we make the following assumptions for problem (\ref{fs}). All of them are very common assumptions in the theoretical analysis of stochastic
gradient descent method.

\begin{assumption}
We assume that following conditions hold,
\begin{itemize}
 \item \textbf{Independence:} All random samples $i$ are selected independently to each other.
 \item \textbf{Unbiased Gradient:}  The stochastic gradient $\nabla f_{i}(x)$ is unbiased,
 \begin{eqnarray}
  \mathbb{E} \left[ \nabla f_{i}(x) \right] = \nabla f(x)
 \end{eqnarray}
 \item \textbf{Lipschitz Gradient:} We say $f(x)$ is $L$-$smooth$ if there is a constant $L$ such that
 \begin{eqnarray}
  ||\nabla f(x) - \nabla f(y) || \leq L || x - y||
 \end{eqnarray}
 Throughout, we also assume that the function $f_{i}(x)$ is $L$-$smooth$, so that
$||\nabla f_{i}(x) - \nabla f_{i}(y) || \leq L || x - y||$

 \item \textbf{Bounded Delay:} Time delay variable $\tau$ is upper bounded, namely $\max \tau \leq \Delta$. In practice, $\Delta$ is related with the number of workers.

\end{itemize}
\end{assumption}

\section{Asynchronous Stochastic Gradient Descent with Variance Reduction for Shared-memory Architecture}
In this section, we propose  asynchronous SVRG method for shared-memory architecture, and prove that it converges with rate $O(1/T)$. In \cite{reddi2016stochastic,zhu2016svrg}, it is proved that SVRG has a convergence rate of $O(1/T)$ on non-convex problem. In this section, we follow the convergence analysis in \cite{reddi2016stochastic}, and extends it to asynchronous convergence analysis on shared-memory architecture.

\subsection{Algorithm Description}

Following the setting in \cite{lian2015asynchronous}, we define one iteration as a modification on any single component of $x$ in the shared memory.
We use $x_{t}^{s+1}$ to denote the value of parameter $x$ in the shared-memory after $(ms+t)$ iterations, and Equation (\ref{update_shared}) represents the update rule of
parameter $x$ in iteration $t$,
\begin{eqnarray}
\label{update_shared}
(x_{t+1}^{s+1})_{k_t} = (x_{t}^{s+1})_{k_t} - \eta (v_{t}^{s+1})_{k_t}\,,
\end{eqnarray}
where $k_t \in \{1,...,d\}$ is the index of component in $x$, and learning rate $\eta$ is constant. $v_t^{s+1}$ is defined as follows,
\begin{eqnarray}
\label{shared_v_2}
v_t^{s+1} = \frac{1}{|I_t|} \sum\limits_{i_t \in I_t} \left( \nabla f_{i_t}(\hat x_{t,i_t}^{s+1}) - \nabla f_{i_t}(\tilde{x}^s) + \nabla f(\tilde{x}^s) \right)
\end{eqnarray}
where $\tilde{x}^s$ denotes a snapshot of $x$ after every $m$ iterations. $\hat x_{t,i_t}^{s+1}$ denotes the parameter in a worker used to compute gradient with sample $i_t$ , $i_t$ denotes the
index of a  sample, and $I_t$ is index set of mini-batch samples. The definition of $\hat x_{t,i_t}^{s+1}$ follows the analysis in \cite{lian2015asynchronous},
where $\hat x_{t,i_t}^{s+1}$ is assumed to be some earlier state of $x$ in the
shared memory.
\begin{eqnarray}
\hat x_{t,i_t}^{s+1} = x_{t}^{s+1} - \sum\limits_{j \in J(t)} (x_{j+1}^{s+1} - x_j^{s+1})
\end{eqnarray}
where $J(t)\in \{t-1,....,t-\Delta \}$ is a subset
of index numbers in previous iterations, $\Delta$ is the upper bound of time delay.  In Algorithm \ref{alg2}, we summarize the asynchronous SVRG on shared-memory architecture.
\begin{algorithm}                      % enter the algorithm environment
\caption{Shared-AsySVRG}         % give the algorithm a caption
\label{alg2}                           % and a label for \ref{} commands later in the document
\begin{algorithmic}                    % enter the algorithmic environment
  \STATE Initialize $x^0 \in \mathbb{R}^d$.\\
    \FOR{$s = 0,1,2,,..,S-1$ }
     \STATE $\tilde{x}^s \leftarrow  x^s$;
     \STATE Compute full gradient $\nabla f(\tilde{x}^s) \leftarrow  \frac{1}{n} \sum\limits_{i=1}^n \nabla f_{i}(\tilde{x}^s)$;
     \STATE \textbf{Parallel Computation on Multiple Threads}
     \FOR{$t=0,1,2,...,m-1$}
     \STATE Randomly select mini-batch $I_t$ from $\{1,....n \}$;
     \STATE Compute the gradient: $v_t^{s+1} \leftarrow \frac{1}{|I_t|} \sum\limits_{i_t \in I_t} \left( \nabla f_{i_t}(\hat x_{t,i_t}^{s+1}) - \nabla f_{i_t}(\tilde{x}^s) + \nabla f(\tilde{x}^s) \right)$
     \STATE Randomly select $k_t$ from $\{1,...,d \}$
      \STATE Update $(x_{t+1}^{s+1})_{k_t} \leftarrow  (x_{t}^{s+1})_{k_t} - \eta (v_{t}^{s+1})_{k_t}
$
     \ENDFOR
     \STATE $x^{s+1} \leftarrow x^{s+1}_{m}$
    \ENDFOR
\end{algorithmic}
\end{algorithm}

\subsection{Convergence Analysis}
\begin{corollary}
\label{lem4}
 For the definition of the variance reduced gradient $v_t^{s+1}$ in Equation (\ref{shared_v_2}), and as per \cite{reddi2015variance} we define,
\begin{eqnarray}
\label{shared_u_2}
u_t^{s+1} =  \frac{1}{|I_t|} \sum\limits_{i_t \in I_t} \left( \nabla f_{i_t}(x_{t}^{s+1}) - \nabla f_{i_t}(\tilde{x}^s) + \nabla f(\tilde{x}^s)\right)
\end{eqnarray}
We have the following inequality,
\begin{eqnarray}
\sum\limits_{t=0}^{m-1} \mathbb{E}\left[ ||v_{t}^{s+1}||^2 \right]  &\leq& \frac{2d}{d-2L^2 \Delta^2 \eta^2}  \sum\limits_{t=0}^{m-1}  \mathbb{E}\left[ ||u_t^{s+1} ||^2 \right]
\end{eqnarray}
\end{corollary}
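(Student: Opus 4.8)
The plan is to control the gap between the variance-reduced gradient $v_t^{s+1}$, which is evaluated at the stale iterate $\hat x_{t,i_t}^{s+1}$, and the ``clean'' quantity $u_t^{s+1}$ evaluated at the current iterate $x_t^{s+1}$, and then convert the resulting recursive inequality into the claimed closed form. First I would write $v_t^{s+1} = u_t^{s+1} + (v_t^{s+1} - u_t^{s+1})$ and apply $\|a+b\|^2 \le 2\|a\|^2 + 2\|b\|^2$ to obtain
\[
\mathbb{E}\left[\|v_t^{s+1}\|^2\right] \le 2\,\mathbb{E}\left[\|u_t^{s+1}\|^2\right] + 2\,\mathbb{E}\left[\|v_t^{s+1} - u_t^{s+1}\|^2\right].
\]
Since $v_t^{s+1}$ and $u_t^{s+1}$ differ only in the argument fed to each $\nabla f_{i_t}$, applying Jensen's inequality across the mini-batch average and then the $L$-Lipschitz property of $\nabla f_{i_t}$ reduces the cross term to a bound in terms of $\|\hat x_{t,i_t}^{s+1} - x_t^{s+1}\|^2$.

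Next I would unfold the staleness using $\hat x_{t,i_t}^{s+1} - x_t^{s+1} = -\sum_{j \in J(t)}(x_{j+1}^{s+1} - x_j^{s+1})$ with $|J(t)| \le \Delta$, so that Cauchy--Schwarz gives $\|\hat x_{t,i_t}^{s+1} - x_t^{s+1}\|^2 \le \Delta \sum_{j \in J(t)} \|x_{j+1}^{s+1} - x_j^{s+1}\|^2$. The crucial step is to take the expectation over the uniformly random coordinate $k_j \in \{1,\dots,d\}$: because a single shared-memory iteration modifies only coordinate $k_j$, we have $\|x_{j+1}^{s+1} - x_j^{s+1}\|^2 = \eta^2 ((v_j^{s+1})_{k_j})^2$, and averaging over $k_j$ yields $\mathbb{E}[\|x_{j+1}^{s+1} - x_j^{s+1}\|^2] = \frac{\eta^2}{d}\,\mathbb{E}[\|v_j^{s+1}\|^2]$. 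This is exactly where the factor $1/d$ in the final constant is generated; it is a feature of the coordinate-wise update and would be absent for a full-vector write.

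Combining these pieces produces the per-iteration recursion $\mathbb{E}[\|v_t^{s+1}\|^2] \le 2\,\mathbb{E}[\|u_t^{s+1}\|^2] + \frac{2L^2\Delta\eta^2}{d}\sum_{j\in J(t)}\mathbb{E}[\|v_j^{s+1}\|^2]$. Summing over $t=0,\dots,m-1$ and then exchanging the order of the double summation invokes a second structural fact: since $J(t)\subseteq\{t-\Delta,\dots,t-1\}$, each index $j$ belongs to at most $\Delta$ of the sets $J(t)$, so $\sum_t \sum_{j\in J(t)}\mathbb{E}[\|v_j^{s+1}\|^2] \le \Delta\sum_t\mathbb{E}[\|v_t^{s+1}\|^2]$. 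The result is a self-referential bound in which $\sum_t\mathbb{E}[\|v_t^{s+1}\|^2]$ appears on the right with coefficient $\frac{2L^2\Delta^2\eta^2}{d}$. Moving that term to the left and dividing by $1 - \frac{2L^2\Delta^2\eta^2}{d}$, which requires $d > 2L^2\Delta^2\eta^2$ for positivity, yields exactly the factor $\frac{2d}{d-2L^2\Delta^2\eta^2}$.

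I expect the main obstacle to be tracking the nested expectations carefully enough that the coordinate-sampling average can be taken independently of the sample draw and the past history, and—equally delicate—justifying the double-sum exchange. The $\Delta^2$ in the final constant arises as the product of the Cauchy--Schwarz factor $\Delta$ and the overlap-counting factor $\Delta$, so landing both copies of $\Delta$ together with the single $1/d$ in the correct places is the heart of the argument; the remaining manipulation of the recursion is routine.
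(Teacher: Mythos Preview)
Your proposal is correct and follows essentially the same route as the paper's proof: split $v_t^{s+1}$ via $\|a+b\|^2\le 2\|a\|^2+2\|b\|^2$, use Lipschitzness and the mini-batch convexity bound to reduce to $\|\hat x_{t,i_t}^{s+1}-x_t^{s+1}\|^2$, unfold the staleness with a Cauchy--Schwarz factor of $\Delta$, pick up $\eta^2/d$ from the expectation over the random coordinate $k_j$, sum over $t$, and use that each index $j$ is hit by at most $\Delta$ of the sets $J(t)$ before rearranging. The only cosmetic difference is that the paper writes the delay set as $J(t,i_t)$ (potentially depending on the sample) and carries the $\frac{1}{b}\sum_{i_t}$ through the sum, whereas you collapse it to $J(t)$; both yield the same bound.
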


where $\mathbb{E}\left[ ||u_{t}^{s+1}||^2 \right]$ is upper bounded in \cite{reddi2015variance}.
\begin{eqnarray}
\mathbb{E}\left[ ||u_{t}^{s+1}||^2 \right] &\leq& 2 \mathbb{E}\left[ ||  \nabla f(x_{t}^{s+1})||^2 \right]  +   \frac{2L^2}{b}\mathbb{E}\left[ || x_{t}^{s+1}- \tilde{x}^s ||^2 \right]
\end{eqnarray}

Then, we follow the  convergence proof of SVRG for non-convex optimization in \cite{reddi2016stochastic}, and extends it to asynchronous case.
\begin{theorem}
\label{thm_m_3}
 Let $c_m=0$, learning rate $\eta > 0$ is constant, $\beta_t = \beta >0$, $b$ denotes the size of mini-batch. We define:
\begin{eqnarray}
c_t = c_{t+1}(1+ \frac{\eta\beta_t}{d}+ \frac{4L^2\eta^2}{(d-2L^2 \Delta^2 \eta^2)b}) + \frac{4L^2}{(d-2L^2 \Delta^2 \eta^2)b}  (\frac{L^2 \Delta^2 \eta^3}{2d} + \frac{\eta^2L}{2})
\end{eqnarray}
\begin{eqnarray}
 \Gamma _t =  \frac{\eta}{2d} -\frac{4}{d-2L^2 \Delta^2 \eta^2}  (\frac{L^2 \Delta^2 \eta^3}{2d} + \frac{\eta^2L}{2} + c_{t+1}\eta^2)
\end{eqnarray}
such that
$\Gamma_t >0$  for  $0 \leq t \leq m-1$. Define  $\gamma = \min_t \Gamma_t$, $x^*$ is the optimal solution. Then, we have the following ergodic convergence rate for iteration $T$,
\begin{eqnarray}
\frac{1}{T}\sum\limits_{s=0}^{S-1}\sum\limits_{t=0}^{m-1} \mathbb{E}\left[  ||\nabla f(x_t^{s+1})||^2 \right]  \leq \frac{\mathbb{E}\left[  f( x^{0})  -  f( x^{*}) \right] }{T\gamma}
\end{eqnarray}
\end{theorem}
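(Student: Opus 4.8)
The plan is to build a Lyapunov (potential) function that augments the objective with a weighted squared distance to the snapshot, and to establish a per-iteration descent on it. Concretely, following the non-convex SVRG template of \cite{reddi2016stochastic}, I would set $R_t^{s+1} = \mathbb{E}\left[ f(x_t^{s+1}) + c_t \|x_t^{s+1} - \tilde{x}^s\|^2 \right]$ and aim to prove the one-step inequality $R_{t+1}^{s+1} \leq R_t^{s+1} - \Gamma_t \mathbb{E}\left[ \|\nabla f(x_t^{s+1})\|^2 \right]$ with $c_t$ and $\Gamma_t$ exactly the quantities defined in the statement. Everything then reduces to a double telescoping.

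First I would apply the $L$-smoothness descent lemma to $f(x_{t+1}^{s+1})$. Because a single iteration only touches coordinate $k_t$, the increment is $x_{t+1}^{s+1} - x_t^{s+1} = -\eta (v_t^{s+1})_{k_t} e_{k_t}$; averaging over the uniform draw of $k_t \in \{1,\dots,d\}$ replaces the inner-product term by $-\frac{\eta}{d}\langle \nabla f(x_t^{s+1}), \mathbb{E}[v_t^{s+1}]\rangle$ and the quadratic term by $\frac{L\eta^2}{2d}\mathbb{E}[\|v_t^{s+1}\|^2]$. This coordinate averaging is the origin of the $1/d$ factors that pervade the recursion. In parallel I would expand $\|x_{t+1}^{s+1} - \tilde{x}^s\|^2$ about $x_t^{s+1}$, average over $k_t$, and attach the weight $c_{t+1}$; the resulting cross term involving $x_t^{s+1} - \tilde{x}^s$ is split by a Young's inequality with free parameter $\beta_t$, which is precisely what produces the $\frac{\eta\beta_t}{d}$ contribution to $c_t$.

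The hard part is the asynchrony. In the synchronous case unbiasedness gives $\mathbb{E}[v_t^{s+1}] = \nabla f(x_t^{s+1})$, but here the conditional expectation is the full gradient evaluated at the \emph{stale} iterate $\hat{x}_{t,i_t}^{s+1}$. I would therefore write $\nabla f(\hat{x}) = \nabla f(x_t^{s+1}) - (\nabla f(x_t^{s+1}) - \nabla f(\hat{x}))$, keep the $-\frac{\eta}{2d}\|\nabla f(x_t^{s+1})\|^2$ that survives (the leading term of $\Gamma_t$), and control the residual by $L$-smoothness, $\|\nabla f(x_t^{s+1}) - \nabla f(\hat{x})\| \leq L\|x_t^{s+1} - \hat{x}\|$. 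The staleness $\|x_t^{s+1} - \hat{x}\|$ is a sum of at most $\Delta$ intervening single-coordinate updates, so Cauchy--Schwarz turns $\|x_t^{s+1} - \hat{x}\|^2$ into $\frac{\eta^2\Delta}{d} \sum_{j \in J(t)} \mathbb{E}[\|v_j^{s+1}\|^2]$. Summing over $t$ makes each $\mathbb{E}[\|v_j^{s+1}\|^2]$ recur at most $\Delta$ times, which is exactly the coupling that Corollary \ref{lem4} is designed to break: it collapses $\sum_t \mathbb{E}[\|v_t^{s+1}\|^2]$ into $\frac{2d}{d - 2L^2\Delta^2\eta^2}\sum_t \mathbb{E}[\|u_t^{s+1}\|^2]$, after which the stated bound $\mathbb{E}[\|u_t^{s+1}\|^2] \leq 2\mathbb{E}[\|\nabla f(x_t^{s+1})\|^2] + \frac{2L^2}{b}\mathbb{E}[\|x_t^{s+1} - \tilde{x}^s\|^2]$ re-expresses everything in the two quantities tracked by the potential. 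The positivity $d - 2L^2\Delta^2\eta^2 > 0$ required for this factor is the quantitative content of the bounded-delay assumption and is what forces $\eta$ to be small relative to $\Delta$.

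Finally I would collect coefficients. The terms multiplying $\mathbb{E}[\|x_t^{s+1} - \tilde{x}^s\|^2]$ assemble into the backward recursion defining $c_t$, and those multiplying $\mathbb{E}[\|\nabla f(x_t^{s+1})\|^2]$ assemble into $\Gamma_t$, giving the target one-step inequality. Setting $\gamma = \min_t \Gamma_t$ and telescoping over $t = 0,\dots,m-1$ uses $c_m = 0$ (so $R_m^{s+1} = \mathbb{E}[f(x^{s+1})]$) and $x_0^{s+1} = \tilde{x}^s$ (so the distance term vanishes and $R_0^{s+1} = \mathbb{E}[f(x^s)]$), yielding $\mathbb{E}[f(x^s) - f(x^{s+1})] \geq \gamma \sum_{t=0}^{m-1} \mathbb{E}[\|\nabla f(x_t^{s+1})\|^2]$. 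Telescoping once more over $s = 0,\dots,S-1$, using $f(x^S) \geq f(x^*)$ and $T = mS$, and dividing by $T\gamma$ produces the claimed ergodic $O(1/T)$ bound. The one genuine subtlety beyond routine algebra is the staleness bookkeeping in the previous step; everything else is the standard potential-telescoping argument.
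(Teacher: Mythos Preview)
Your overall architecture matches the paper's: the Lyapunov function $R_t^{s+1}$, the descent lemma with coordinate averaging giving the $1/d$ factors, Young's inequality on the distance cross term with parameter $\beta_t$, the staleness bound via Lipschitz smoothness, the invocation of Corollary~\ref{lem4} after summing over $t$ to close the $\|v_t\|^2$ coupling, and the double telescoping are all exactly as in the proof. One small framing point: the inequality you actually obtain is not pointwise in $t$ but only after summation $\sum_{t=0}^{m-1}$, because the delayed terms $\sum_{j\in J(t)}\|v_j^{s+1}\|^2$ couple different iterations; you correctly identify this later, so this is only a matter of wording.

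There is, however, one genuine technical gap. When you split the distance cross term $-\frac{2\eta}{d}\bigl\langle \frac{1}{b}\sum_{i_t}\nabla f(\hat x_{t,i_t}^{s+1}),\, x_t^{s+1}-\tilde{x}^s\bigr\rangle$ with Young's inequality and parameter $\beta_t$, you get not only the $\frac{\eta\beta_t}{d}\|x_t^{s+1}-\tilde{x}^s\|^2$ contribution you mention but also $+\frac{c_{t+1}\eta}{d\beta_t}\bigl\|\frac{1}{b}\sum_{i_t}\nabla f(\hat x_{t,i_t}^{s+1})\bigr\|^2$, which you never address. Your additive split $\nabla f(\hat x)=\nabla f(x_t^{s+1})-(\nabla f(x_t^{s+1})-\nabla f(\hat x))$ followed by Young's yields only $-\frac{\eta}{2d}\|\nabla f(x_t^{s+1})\|^2+\frac{\eta}{2d}\|\nabla f(x_t^{s+1})-\nabla f(\hat x)\|^2$. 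The paper instead uses the polarization identity $\langle a,b\rangle=\tfrac12(\|a\|^2+\|b\|^2-\|a-b\|^2)$ on the inner product in the descent lemma, which produces an \emph{additional} term $-\frac{\eta}{2d}\bigl\|\frac{1}{b}\sum_{i_t}\nabla f(\hat x_{t,i_t}^{s+1})\bigr\|^2$. That extra negative piece is precisely what absorbs the leftover Young's term, under the side condition $c_{t+1}/\beta_t\le\tfrac12$ that the paper imposes explicitly and you omit. Without this step you cannot recover the exact $c_t$ and $\Gamma_t$ in the statement; you would either have to bound $\|\tfrac{1}{b}\sum\nabla f(\hat x)\|^2$ separately (altering the constants) or switch to the polarization identity as the paper does.
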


\begin{theorem}
\label{thm_m_4}
Let $\eta = \frac{u_0b}{Ln^\alpha}$, where $0 < u_0 < 1$ and $0< \alpha \leq1$, $\beta =  {2L}$,
$m = \lfloor  \frac{dn^{\alpha}}{6u_0b}  \rfloor $ and $T$ is total iteration. If time delay $\Delta$ is upper bounded by
\begin{eqnarray}
\Delta^2 < \min \{\frac{d}{2u_0b}, \frac{3d-28u_0bd}{28u_0^2b^2}  \}\end{eqnarray}

Then there exists universal constant $u_0$, $\sigma$, such that it holds that
$\gamma \geq \frac{\sigma b}{dLn^{\alpha}}$ in Theorem \ref{thm_m_3} and
\begin{eqnarray}
\frac{1}{T}\sum\limits_{s=0}^{S-1}\sum\limits_{t=0}^{m-1} \mathbb{E}\left[  ||\nabla f(x_t^{s+1})||^2 \right]  \leq \frac{dLn^{\alpha}\mathbb{E}\left[  f( x^{0})  -  f( x^{*}) \right] }{bT\sigma}
\end{eqnarray}
\end{theorem}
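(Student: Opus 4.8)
The plan is to treat Theorem \ref{thm_m_4} as a corollary of Theorem \ref{thm_m_3}: once I establish the uniform lower bound $\gamma = \min_t \Gamma_t \ge \frac{\sigma b}{dLn^\alpha}$ under the stated choices of $\eta,\beta,m$ and the delay bound, the claimed rate follows immediately by substituting this bound into the conclusion of Theorem \ref{thm_m_3}, since then $\frac{1}{T\gamma} \le \frac{dLn^\alpha}{\sigma b T}$. So the whole task reduces to showing that $\Gamma_t$ stays bounded below by a positive quantity of order $\frac{\eta}{d} \sim \frac{b}{dLn^\alpha}$.

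First I would solve the linear recursion for $c_t$. Writing the growth factor as $\rho = 1 + \frac{\eta\beta}{d} + \frac{4L^2\eta^2}{(d-2L^2\Delta^2\eta^2)b}$ and the additive term as $\phi = \frac{4L^2}{(d-2L^2\Delta^2\eta^2)b}\bigl(\frac{L^2\Delta^2\eta^3}{2d} + \frac{\eta^2L}{2}\bigr)$, the terminal condition $c_m=0$ gives the closed form $c_0 = \phi\,\frac{\rho^m-1}{\rho-1}$, and since the sequence decreases in $t$ we have $c_{t+1}\le c_0$ for every $t$. The key quantitative step is to show $\rho^m$ is bounded by an absolute constant. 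Substituting $\eta=\frac{u_0b}{Ln^\alpha}$ and $\beta=2L$ gives $\rho-1 = \frac{2u_0 b}{dn^\alpha} + \frac{4L^2\eta^2}{(d-2L^2\Delta^2\eta^2)b}$, and with $m\le \frac{dn^\alpha}{6u_0 b}$ the product $(\rho-1)m$ is dominated by the first term, which contributes at most $\tfrac13$; the bound $\Delta^2 < \frac{d}{2u_0 b}$ keeps $d-2L^2\Delta^2\eta^2$ comparable to $d$ so that the second term contributes only a lower-order $O(n^{-\alpha})$ amount. Hence $(\rho-1)m$ is a constant strictly below $1$, $\rho^m \le e^{(\rho-1)m}$ is bounded, and dividing by $\rho-1 \ge \frac{2u_0 b}{dn^\alpha}$ yields $c_0 = O\!\left(\frac{Lu_0}{n^\alpha}\right)$.

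Next I would insert these estimates into $\Gamma_t$. Factoring out the positive term gives $\Gamma_t = \frac{\eta}{2d}\bigl[\,1 - \frac{8d}{d-2L^2\Delta^2\eta^2}\bigl(\frac{L^2\Delta^2\eta^2}{2d} + \frac{\eta L}{2} + c_{t+1}\eta\bigr)\bigr]$, so it suffices to bound the bracket below by a positive constant. Under the parameter choices, $\frac{\eta L}{2} = \frac{u_0 b}{2n^\alpha}$ is the dominant inner term, while $\frac{L^2\Delta^2\eta^2}{2d}$ and $c_{t+1}\eta$ are $O(n^{-\alpha})$ and further suppressed by the small factor $u_0 b$. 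Here the two parts of the delay constraint do the work: $\Delta^2 < \frac{d}{2u_0 b}$ controls the denominator $d-2L^2\Delta^2\eta^2$, and $\Delta^2 < \frac{3d-28u_0bd}{28u_0^2 b^2}$ (which in particular forces $u_0 b < \tfrac{3}{28}$) guarantees that the entire subtracted quantity stays below $1$. This leaves a bracket bounded below by an absolute constant, hence $\gamma \ge \frac{\sigma b}{dLn^\alpha}$ for a suitable universal $\sigma$, and the final inequality is then exactly the conclusion of Theorem \ref{thm_m_3}.

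I expect the main obstacle to be the careful bookkeeping of constants, because $\Delta$ enters the analysis in a coupled way: it sits inside the denominator $d-2L^2\Delta^2\eta^2$ appearing in both $\rho$ and $\phi$ (and therefore inside $c_0$), and simultaneously inside the explicit $\frac{L^2\Delta^2\eta^3}{2d}$ numerator terms of $c_t$ and $\Gamma_t$. Untangling this coupling to produce the clean threshold $\min\{\frac{d}{2u_0b},\,\frac{3d-28u_0bd}{28u_0^2b^2}\}$ rather than an implicit inequality, and verifying that a single universal $\sigma$ works uniformly in $t$, is where the delicate work lies; the remainder is routine substitution together with the elementary estimate $\rho^m \le e^{(\rho-1)m}$.
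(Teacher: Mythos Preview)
Your proposal is correct and follows essentially the same route as the paper: solve the $c_t$ recursion in closed form as $c_0=\phi\,\frac{(1+\theta)^m-1}{\theta}$, bound $(1+\theta)^m$ by an absolute constant via $\theta m\le 1$ (your $\rho^m\le e^{(\rho-1)m}$ is exactly this), and then verify that the bracket in $\Gamma_t$ remains uniformly positive under the two delay constraints. The only difference in emphasis is that the paper invokes $\Delta^2<\tfrac{d}{2u_0b}$ primarily to force $c_0\le \beta/2=L$, which is the implicit hypothesis $c_{t+1}/\beta_t\le\tfrac12$ used inside the proof of Theorem~\ref{thm_m_3}, and then plugs the cruder bound $c_0\le L$ into $\Gamma_t$; your sharper estimate $c_0=O(Lu_0/n^{\alpha})$ handles both roles simultaneously, so there is no gap.
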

Since the convergence rate does not depend on the delay parameter $\Delta$, the negative effect of using old values of $x$ for stochastic gradient evaluation vanishes asymptoticly.
Thus, linear speedup is achievable if $\Delta^2$ is upper bounded.

\section{Asynchronous Stochastic Gradient Descent with Variance Reduction for Distributed-memory Architecture}
In this section, we propose  asynchronous SVRG algorithm for distributed-memory architecture, and prove that it converges with rate $O(1/T)$.

\subsection{Algorithm Description}
In each iteration, parameter $x$ is updated through the following update rule,
\begin{eqnarray}
x_{t+1}^{s+1} = x_{t}^{s+1} - \eta v_{t}^{s+1}
\end{eqnarray}
where learning rate $\eta$ is constant, $v_t^{s+1}$ represents the variance reduced gradient,
\begin{eqnarray}
\label{batch_v}
v_t^{s+1} =  \frac{1}{|I_t|} \sum\limits_{i_t \in I_t}  \left( \nabla f_{i_t}(x_{t-\tau_{i}}^{s+1}) - \nabla f_{i_t}(\tilde{x}^s) + \nabla f(\tilde{x}^s)\right)
\end{eqnarray}
where $\tilde{x}^s$ means a snapshot of $x$ after every $m$ iterations, and $x_{t-\tau_{i}}^{s+1}$ denotes the current parameter used to compute gradient in a worker. $i_t$
denotes the index of a sample, $\tau_i$ denotes time delay for each sample $i$, and mini-batch size is  $|I_t|$. We summarize the asynchronous SVRG on distributed-memory
architecture in the  Algorithm \ref{alg1_1} and Algorithm \ref{alg1_2}, Algorithm \ref{alg1_1} shows operations in server node, and Algorithm \ref{alg1_2} shows operations in
worker node.
\begin{algorithm}                      % enter the algorithm environment
\caption{Distributed-AsySVRG Server Node}          % give the algorithm a caption
\label{alg1_1}                           % and a label for \ref{} commands later in the document
\begin{algorithmic}                    % enter the algorithmic environment
  \STATE Initialize $x^0 \in \mathbb{R}^d$.\\
    \FOR{$s = 0,1,2,,..,S-1$ }
     \STATE $\tilde{x}^s \leftarrow  x^s$;
     \STATE Broadcast $\tilde{x}^s$ to all workers.
     \STATE Receive gradient from all workers and compute full gradient $\nabla f(\tilde{x}^s) \leftarrow  \frac{1}{n} \sum\limits_{i=1}^n \nabla f_{i}(\tilde{x}^s)$;
     \FOR{$t=0,1,2,...,m-1$}
     \STATE Receive gradients  $\frac{1}{|I_t|} \sum\limits_{i_t \in I_t}  \nabla f_{i_t}(x_{t-\tau_{i}}^{s+1})  $ and $\frac{1}{|I_t|} \sum\limits_{i_t \in I_t}  \nabla f_{i_t}(\tilde{x}^s)  $ from a specific worker.
     \STATE Compute  gradient: $v_t^{s+1} \leftarrow  \frac{1}{|I_t|} \sum\limits_{i_t \in I_t}  \left( \nabla f_{i_t}(x_{t-\tau_{i}}^{s+1}) - \nabla f_{i_t}(\tilde{x}^s) + \nabla f(\tilde{x}^s)\right)
$
      \STATE Update $x_{t+1}^{s+1} \leftarrow  x_{t}^{s+1} - \eta v_{t}^{s+1}
$
     \ENDFOR
     \STATE $x^{s+1} \leftarrow x^{s}_{m}$
    \ENDFOR
\end{algorithmic}
\end{algorithm}

\begin{algorithm}                      % enter the algorithm environment
\caption{Distributed-AsySVRG Worker Node}          % give the algorithm a caption
\label{alg1_2}                           % and a label for \ref{} commands later in the document
\begin{algorithmic}      % enter the algorithmic environment
  \STATE Receive parameter $x_{t-\tau_{i}}^{s+1}$ from server.
  \STATE Compute gradient  $\frac{1}{|I_t|} \sum\limits_{i_t \in I_t}  \nabla f_{i_t}(x_{t-\tau_{i}}^{s+1}) $ and send it to server.\\
  \STATE Compute gradient $\frac{1}{|I_t|} \sum\limits_{i_t \in I_t}  \nabla f_{i_t}(\tilde{x}^s)  $ and send it to server. \\
\end{algorithmic}
\end{algorithm}

\subsection{Convergence Analysis}
\iffalse
\begin{corollary}
\label{lem2}
 For definition of the variance reduced gradient $v_t^{s+1}$ in Equation (\ref{batch_v}), and we define:
\begin{eqnarray}
u_t^{s+1} =  \frac{1}{|I_t|} \sum\limits_{i_t \in I_t}  \left( \nabla f_{i_t}(x_{t}^{s+1}) - \nabla f_{i_t}(\tilde{x}^s) + \nabla f(\tilde{x}^s)\right)
\end{eqnarray}
where $|I_t|=b$, we have the following inequality:
\begin{eqnarray}
\sum\limits_{t=0}^{m-1} \mathbb{E}\left[ ||v_{t}^{s+1}||^2 \right]  &\leq& \frac{2}{1-2L^2 \Delta^2 \eta^2}  \sum\limits_{t=0}^{m-1}  \mathbb{E}\left[ ||u_t^{s+1} ||^2 \right]
\end{eqnarray}
where $\mathbb{E}\left[ ||u_{t}^{s+1}||^2 \right]$ is upper bounded,
\begin{eqnarray}
\mathbb{E}\left[ ||u_{t}^{s+1}||^2 \right] &\leq& 2 \mathbb{E}\left[ ||  \nabla f(x_{t}^{s+1})||^2 \right]  +   \frac{2L^2}{b}\mathbb{E}\left[ || x_{t}^{s+1}- \tilde{x}^s ||^2 \right]
\end{eqnarray}
\end{corollary}
\fi

Our idea of convergence analysis and techniques come from \cite{reddi2016stochastic}, and we use it to analyze the convergence rate of distributed algorithm.
\begin{theorem}
\label{thm_m_1}
Let $c_m=0$, learning rate $ \eta > 0$ is constant, $\beta_t = \beta >0$, $b$ denotes the size of mini-batch. We define
\begin{eqnarray}
c_t =  c_{t+1}\left(1+\eta\beta_t + \frac{4L^2\eta^2}{(1-2L^2 \Delta^2 \eta^2)b}\right) + \frac{4L^2}{(1-2L^2 \Delta^2 \eta^2)b}  \left(\frac{L^2 \Delta^2 \eta^3}{2} + \frac{\eta^2L}{2}\right)
\end{eqnarray}
\begin{eqnarray}
 \Gamma _t = \frac{\eta}{2} -\frac{4}{(1-2L^2 \Delta^2 \eta^2)}  (\frac{L^2 \Delta^2 \eta^3}{2} + \frac{\eta^2L}{2} + c_{t+1}\eta^2)
\end{eqnarray}
such that
$\Gamma_t >0$  for  $0 \leq t \leq m-1$. Define  $\gamma = \min_t \Gamma_t$, $x^*$ is the optimal solution. Then, we have the following ergodic convergence rate for iteration $T$:
\begin{eqnarray}
\frac{1}{T}\sum\limits_{s=0}^{S-1}\sum\limits_{t=0}^{m-1} \mathbb{E}\left[  ||\nabla f(x_t^{s+1})||^2 \right]  \leq \frac{\mathbb{E}\left[  f( x^{0})  -  f( x^{*}) \right] }{T\gamma}
\end{eqnarray}
\end{theorem}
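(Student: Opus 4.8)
The plan is to follow the potential-function (Lyapunov) argument of \cite{reddi2016stochastic} for non-convex SVRG, modified so that the bias introduced by the stale gradients $\nabla f_{i_t}(x_{t-\tau_{i}}^{s+1})$ is absorbed into the recursion. I would define the potential
\[
R_t^{s+1} = \mathbb{E}\left[f(x_t^{s+1}) + c_t\|x_t^{s+1} - \tilde{x}^s\|^2\right]
\]
and aim to establish the per-iteration decrease
\[
R_{t+1}^{s+1} \leq R_t^{s+1} - \Gamma_t\,\mathbb{E}\left[\|\nabla f(x_t^{s+1})\|^2\right].
\]
Once this holds, because $c_m = 0$ and $x_0^{s+1} = \tilde{x}^s$ forces the coupling term to vanish at the two endpoints of each inner loop, summing over $t = 0,\dots,m-1$ telescopes the $f$-values so that $\sum_t \Gamma_t\mathbb{E}[\|\nabla f(x_t^{s+1})\|^2] \leq \mathbb{E}[f(x^s) - f(x^{s+1})]$; summing over $s = 0,\dots,S-1$, bounding $f$ below by $f(x^*)$, and dividing by $T = mS$ and $\gamma = \min_t\Gamma_t$ yields the claimed rate.

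First I would invoke $L$-smoothness to obtain
\[
f(x_{t+1}^{s+1}) \leq f(x_t^{s+1}) - \eta\langle \nabla f(x_t^{s+1}), v_t^{s+1}\rangle + \frac{L\eta^2}{2}\|v_t^{s+1}\|^2.
\]
The crucial departure from the synchronous analysis is that, taking expectation over the draw of $I_t$, $\mathbb{E}[v_t^{s+1}]$ equals the \emph{delayed} full gradient rather than $\nabla f(x_t^{s+1})$. To control this I would introduce the unbiased surrogate $u_t^{s+1}$ evaluated at the current iterate $x_t^{s+1}$, so that $\mathbb{E}[u_t^{s+1}] = \nabla f(x_t^{s+1})$, and write $v_t^{s+1} = u_t^{s+1} + (v_t^{s+1} - u_t^{s+1})$. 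The inner product then splits into a clean term producing $-\eta\|\nabla f(x_t^{s+1})\|^2$ and a delay error $\langle \nabla f(x_t^{s+1}), v_t^{s+1} - u_t^{s+1}\rangle$. Using $L$-Lipschitzness of each $\nabla f_{i_t}$, this error is bounded through $L\|x_t^{s+1} - x_{t-\tau_{i}}^{s+1}\|$, which is in turn at most $\eta$ times a sum of at most $\Delta$ stale increments $\|v_j^{s+1}\|$; Young's inequality then converts it into terms scaling like $L^2\Delta^2\eta^2\sum_j\|v_j^{s+1}\|^2$, which is exactly the origin of the $L^2\Delta^2\eta^2$ and $\frac{L^2\Delta^2\eta^3}{2}$ factors in $c_t$ and $\Gamma_t$.

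Next I would expand the coupling term as $\|x_{t+1}^{s+1} - \tilde{x}^s\|^2 = \|x_t^{s+1} - \tilde{x}^s\|^2 - 2\eta\langle x_t^{s+1} - \tilde{x}^s, v_t^{s+1}\rangle + \eta^2\|v_t^{s+1}\|^2$, apply Young's inequality with parameter $\beta_t$ to the cross term (giving the $\eta\beta_t$ contribution), and substitute the variance bound $\sum_t\mathbb{E}[\|v_t^{s+1}\|^2] \leq \frac{2}{1-2L^2\Delta^2\eta^2}\sum_t\mathbb{E}[\|u_t^{s+1}\|^2]$, the distributed-memory analogue of Corollary \ref{lem4}, followed by $\mathbb{E}[\|u_t^{s+1}\|^2] \leq 2\mathbb{E}[\|\nabla f(x_t^{s+1})\|^2] + \frac{2L^2}{b}\mathbb{E}[\|x_t^{s+1}-\tilde{x}^s\|^2]$. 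Collecting the coefficient of $\mathbb{E}[\|x_t^{s+1}-\tilde{x}^s\|^2]$ then forces the stated recursion for $c_t$, while collecting the coefficient of $\mathbb{E}[\|\nabla f(x_t^{s+1})\|^2]$ produces exactly $\Gamma_t$; the hypothesis $\Gamma_t > 0$ is what makes the per-step descent usable.

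The main obstacle is the circular dependence between bounding the stale-gradient error and bounding $\|v_t^{s+1}\|^2$: the delay error is expressed through $\sum_j\|v_j^{s+1}\|^2$, yet $\|v_t^{s+1}\|^2$ must itself be controlled by $\|u_t^{s+1}\|^2$, whose bound reintroduces $\|x_t^{s+1}-\tilde{x}^s\|^2$ and hence feeds back into the potential. Breaking this loop requires aggregating the $\|v\|^2$ inequality over an entire inner loop rather than per step, so that each stale index $j$ (ranging over $J(t)\subseteq\{t-\Delta,\dots,t-1\}$) can be re-indexed and counted across the at most $\Delta$ iterations in which it appears; this double accumulation is precisely what produces $\Delta^2$ and, after rearrangement, the denominator $1-2L^2\Delta^2\eta^2$ together with the implicit requirement $2L^2\Delta^2\eta^2 < 1$. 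Keeping every $\Delta$-dependent cross term correctly matched through both the inner telescoping and the outer sum over epochs is the delicate bookkeeping to which the remainder of the proof reduces.
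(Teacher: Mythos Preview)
Your plan matches the paper's Lyapunov argument almost exactly: same potential $R_t^{s+1}$, same epoch-level aggregation of $\sum_t \mathbb{E}\|v_t^{s+1}\|^2$ via the $\frac{2}{1-2L^2\Delta^2\eta^2}$ bound, same telescoping over $s$. The one substantive difference is how the inner product $-\eta\langle\nabla f(x_t^{s+1}),v_t^{s+1}\rangle$ is dismantled. The paper conditions on $I_t$ first so that $\mathbb{E}[v_t^{s+1}] = \frac{1}{b}\sum_{i_t\in I_t}\nabla f(x_{t-\tau_i}^{s+1})$ and then applies the polarization identity $2\langle a,b\rangle = \|a\|^2+\|b\|^2-\|a-b\|^2$, obtaining three pieces: $-\frac{\eta}{2}\|\nabla f(x_t^{s+1})\|^2$, the delay term $+\frac{\eta}{2}\|\nabla f(x_t^{s+1})-\frac{1}{b}\sum\nabla f(x_{t-\tau_i}^{s+1})\|^2$, and an \emph{additional} negative piece $-\frac{\eta}{2}\|\frac{1}{b}\sum\nabla f(x_{t-\tau_i}^{s+1})\|^2$. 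That last term is then used to absorb the Young residual $\frac{c_{t+1}\eta}{\beta_t}\|\frac{1}{b}\sum\nabla f(x_{t-\tau_i}^{s+1})\|^2$ coming from the coupling expansion, under the side condition $c_{t+1}\leq\beta_t/2$ (verified afterwards in Theorem~\ref{thm_m_2}).

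Your $v=u+(v-u)$ split followed by Young's inequality on the error recovers the same $-\frac{\eta}{2}\|\nabla f(x_t^{s+1})\|^2$ and the same delay term, but it does not generate that extra negative piece. Consequently, when you apply Young with parameter $\beta_t$ to $\langle x_t^{s+1}-\tilde{x}^s,v_t^{s+1}\rangle$ as you describe, the residual $\frac{c_{t+1}\eta}{\beta_t}\mathbb{E}\|v_t^{s+1}\|^2$ (or $\frac{c_{t+1}\eta}{\beta_t}\|\frac{1}{b}\sum\nabla f(x_{t-\tau_i}^{s+1})\|^2$ if you condition first) has nothing to cancel against and must be carried into the $\|v_t^{s+1}\|^2$ coefficient. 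After the $v\to u$ step this inserts an extra $c_{t+1}\eta/\beta_t$ inside the parentheses of both $c_t$ and $\Gamma_t$, so you would prove a theorem with strictly worse recursions than the ones stated. The qualitative $O(1/T)$ conclusion survives, but to land on the precise $c_t$ and $\Gamma_t$ in the statement you need the polarization-identity route together with the tacit assumption $\frac{1}{2}\geq \frac{c_{t+1}}{\beta_t}$.
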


\begin{theorem}
\label{thm_m_2}
Suppose $f \in \mathcal{F}$. Let $\eta_t = \eta = \frac{u_0b}{Ln^\alpha}$, where $0 < u_0 < 1$ and $0< \alpha \leq1$, $\beta =  2{L}$,
$m = \lfloor  \frac{n^{\alpha}}{6u_0b}  \rfloor $ and $T$ is total iteration. If the time delay $\Delta$ is upper bounded by
\begin{eqnarray}
 \Delta^2 < \min \{ \frac{1}{2u_0b}, \frac{3 - 28u_0b}{28u_0^2b^2}  \}
\end{eqnarray}
then there exists universal constant $u_0$, $\sigma$, such that it holds that:
$\gamma \geq \frac{\sigma b}{Ln^{\alpha}}$ in Theorem \ref{thm_m_1} and
\begin{eqnarray}
\frac{1}{T}\sum\limits_{s=0}^{S-1}\sum\limits_{t=0}^{m-1} \mathbb{E}\left[  ||\nabla f(x_t^{s+1})||^2 \right]  \leq \frac{Ln^{\alpha}\mathbb{E}\left[  f( x^{0})  -  f( x^{*}) \right] }{bT\sigma}
\end{eqnarray}
\end{theorem}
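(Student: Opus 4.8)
The plan is to treat Theorem \ref{thm_m_2} as a corollary of Theorem \ref{thm_m_1}: once the prescribed values of $\eta$, $\beta$, and $m$ are substituted, everything reduces to (i) producing a constant upper bound on the sequence $c_t$, and (ii) using that bound to extract a lower bound $\gamma \geq \frac{\sigma b}{L n^\alpha}$, after which the stated rate follows immediately from the inequality in Theorem \ref{thm_m_1}. First I would abbreviate $\theta = 1 - 2L^2\Delta^2\eta^2$ and rewrite the recursion as $c_t = a\,c_{t+1} + q$, with $a = 1 + \eta\beta + \frac{4L^2\eta^2}{\theta b}$ and $q = \frac{4L^2}{\theta b}\left(\frac{L^2\Delta^2\eta^3}{2} + \frac{\eta^2 L}{2}\right)$. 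Since $c_m = 0$, this linear recursion solves in closed form to $c_0 = q\,\frac{a^m - 1}{a - 1}$, and because $a > 1$ and $q > 0$ the sequence increases as $t$ decreases, so $\max_t c_t = c_0$. The first delay bound $\Delta^2 < \frac{1}{2u_0 b}$ guarantees $2L^2\Delta^2\eta^2 < 1$, keeping $\theta$ bounded away from $0$ (in fact close to $1$ for the prescribed $\eta$), so that $a$, $q$, and $\theta$ are all well controlled.

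The crux is bounding $a^m$ by a universal constant. Substituting $\eta = \frac{u_0 b}{L n^\alpha}$ and $\beta = 2L$ makes $\eta\beta = \frac{2u_0 b}{n^\alpha}$ the dominant contribution to $a - 1$, while the term $\frac{4L^2\eta^2}{\theta b} = O(n^{-2\alpha})$ is lower order. With $m = \lfloor \frac{n^\alpha}{6u_0 b}\rfloor$ the exponent obeys $(a-1)m \leq \frac{1}{3} + o(1)$, so by the elementary inequality $(1+x)^{1/x} \leq e$ I would conclude $a^m \leq e^{(a-1)m}$ is at most a universal constant of order $e^{1/3}$. Feeding this back, the estimates $q = O\!\left(\frac{L u_0^2 b}{n^{2\alpha}}\right)$ and $a - 1 = \Theta\!\left(\frac{u_0 b}{n^\alpha}\right)$ combine to give $c_0 = O\!\left(\frac{L u_0}{n^\alpha}\right)$; that is, $c_0$ scales like $\frac{L}{n^\alpha}$ up to a universal constant.

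With this bound in hand, the final step is to lower-bound $\Gamma_t$. Replacing $c_{t+1}$ by $c_0$ and factoring out $\eta$ yields $\Gamma_t \geq \eta\left[\frac{1}{2} - \frac{4}{\theta}\left(\frac{L^2\Delta^2\eta^2}{2} + \frac{\eta L}{2} + c_0\eta\right)\right]$. Each term inside the bracket is $O(n^{-\alpha})$ or smaller, so for the prescribed parameters the bracket is at least a positive constant. The two delay conditions enter precisely here: the second bound $\Delta^2 < \frac{3 - 28u_0 b}{28 u_0^2 b^2}$ is exactly what forces $\Gamma_t > 0$ with room to spare. This produces $\gamma = \min_t \Gamma_t \geq \frac{\sigma b}{L n^\alpha}$ for a universal $\sigma = \sigma(u_0)$, and substituting into Theorem \ref{thm_m_1} gives the claimed $O\!\left(\frac{L n^\alpha}{bT}\right)$ rate.

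I expect the bookkeeping in this last step to be the main obstacle. Every simplification that discards a lower-order term must remain compatible with the precise numerical thresholds $\frac{1}{2u_0 b}$ and $\frac{3 - 28u_0 b}{28 u_0^2 b^2}$, and the factor-of-$28$ structure suggests that the positivity of the bracket is established by bounding each of the three inner terms against a fixed fraction of $\frac{1}{2}$; tracking these constants so that the resulting delay constraint matches the stated bound, rather than a looser one, is where the care is required. The geometric-sum estimate for $c_0$ is conceptually routine once $a^m$ is pinned down, so I view the constant-matching in the $\Gamma_t$ lower bound as the genuinely delicate part.
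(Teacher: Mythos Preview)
Your plan matches the paper's: solve the linear recursion in closed form as $c_0 = q\,\frac{a^m-1}{a-1}$, bound $a^m$ by a universal constant via $m(a-1)=O(1)$, then lower-bound $\Gamma_t$. The paper writes $\theta$ for your $a-1$ (not for $1-2L^2\Delta^2\eta^2$), uses $\theta \le \tfrac{6u_0b}{n^\alpha}$ so that $(1+\theta)^m\le e$, and arrives at the cruder estimate $c_0 \le \tfrac{L(u_0b\Delta^2+1)}{3}(e-1)=O(L)$ rather than your sharper $c_0=O(L/n^\alpha)$ (which is in fact correct). One point where your narrative diverges: you attribute the first delay condition $\Delta^2<\tfrac{1}{2u_0b}$ to keeping $1-2L^2\Delta^2\eta^2$ positive, but that quantity equals $1-\tfrac{2u_0^2b^2\Delta^2}{n^{2\alpha}}$ and is near $1$ regardless. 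In the paper the first bound instead comes from the hidden hypothesis $c_{t+1}<\beta_t/2=L$ used inside the proof of Theorem~\ref{thm_m_1} (to drop the $-(\tfrac{\eta}{2}-\tfrac{c_{t+1}\eta}{\beta_t})\|\cdot\|^2$ term), which with the paper's $O(L)$ bound on $c_0$ forces $u_0b\Delta^2<\tfrac12$. Your tighter $c_0$ estimate makes that hypothesis automatic, so your argument is sound; but the specific thresholds $\tfrac{1}{2u_0b}$ and the factor $28$ arise from the paper's particular crude simplifications (e.g.\ $\tfrac{1}{1-2L^2\Delta^2\eta^2}\le n^\alpha$ and $e-1<2$), so matching the stated constants exactly would require adopting those same loosenings rather than your cleaner estimates.
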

Because the convergence rate has nothing to do with $\Delta$,  linear speedup is achievable.

\section{Experiments}
In this section, we perform experiments on distributed-memory architecture and shared-memory architecture respectively. One of the main purpose of our experiments is to validate the faster convergence
rate of asynchronous SVRG method, and the other purpose is to demonstrate its linear speedup property. The speedup we consider in this paper is
running time speedup when they reach similar performance, e.g. training loss function value. Given $T$ workers, running time speedup is defined as,
\begin{eqnarray}
 \text{Running time speedup of } T \text{ workers} = \frac{\text{Running time for the serial computation}}{\text{Running time of using } T \text{ workers}}
\end{eqnarray}

\subsection{Shared-memory Architecture}
We conduct experiment on a machine which has $2$ sockets, and each socket has $18$ cores. OpenMP library \footnote{https://openmp.org} is used to handle shared-memory parallelism,
We consider the multi-class problem on MNIST dataset \cite{lecun1998gradient}, and use $10,000$ training samples and $2,000$ testing samples in the experiment.
Each image sample is a vector of $784$ pixels.  We construct a toy three-layer neural network $(784 \times 100 \times 10)$, where ReLU activation function is used in the hidden layer
and there are $10$ classes in MNIST dataset. We train this neural network with softmax loss function, and $\ell_2$ regularization with weight $C = 10^{-3}$. We set mini-batch size
$ |I_t| = 10$, and inner iteration length $m=1,000$. Updating only one component in $x$ in each iteration is too time consuming, therefore we randomly select and update $1,000$
components in each iteration.

We compare following three methods in the experiment:
\begin{itemize}
 \item SGD: We implement stochastic gradient descent (SGD) algorithm and train with the best tuned learning rate. In our experiment, we use polynomial learning
 rate $\eta = \frac{\alpha}{(1+s)^{\beta}}$,
 where $\alpha$  denotes initial learning rate and we tune it from $\{1e^{-2},5e^{-2},1e^{-3},5e^{-3}, 1e^{-4},5e^{-4},1e^{-5},5e^{-5} \}$, $\beta$ is a variable in $\{0,0.1,...,1\}$ and $s$ denotes the epoch number.
 \item SVRG: We also implement stochastic gradient descent with variance reduction (SVRG) method and train with the best tuned constant learning rate $\alpha$.
  \item SGD-SVRG: SVRG method is sensitive to initial point, and  we apply SVRG on a pre-trained model using SGD.
  In the experiment,  we use the pre-trained model after $10$ iterations of SGD method.
\end{itemize}

We test three compared methods on MNIST dataset, and each method trained with best tuned learning rate. Figure \ref{ompcomp} shows the convergence rate of each method. We
compare three criterion in this experiment, loss function value on training dataset, training error rate, and testing error rate.  Figure
\ref{ompcomp_trainloss} shows the curves of loss function on training dataset, it is clear that SGD method converges faster than SVRG method in the first $20$ iterations, and after that, SVRG method outperforms SGD.
$\text{SGD}\_\text{SVRG}$ method  initializes with a pre-trained model, it has the best convergence rate. We are able to draw the same conclusion from
Figure \ref{ompcomp_trainerror} and Figure \ref{ompcomp_testerror}.

We also evaluate SVRG method with different number of threads, and Figure \ref{threads} presents the result of our experiment. In Figure \ref{threads_trainloss}, we plot curves for
each method when they get similar training loss value.  As we can see, the more threads we use in the computation, the less time we need to achieve a similar performance. This phenomenon
is reasonable, because iterations in a loop can be divided into multiple parts, and each thread handles one subset independently. The ideal result of parallel computation is linear
speedup, namely if we use $K$ threads, its working time should be $\frac{1}{K}$ of the time when we just use a single thread. Figure \ref{threads_speedup} shows the ideal speedup and
actual speedup in our experiment. We can find out that a almost linear speedup is achievable when we increase thread numbers. When the number of threads exceeds a threshold,
performance tends  to degrade.

\begin{figure}[h]
\centering
\begin{subfigure}[b]{0.32\textwidth}
\centering
\includegraphics[width=1.7in]{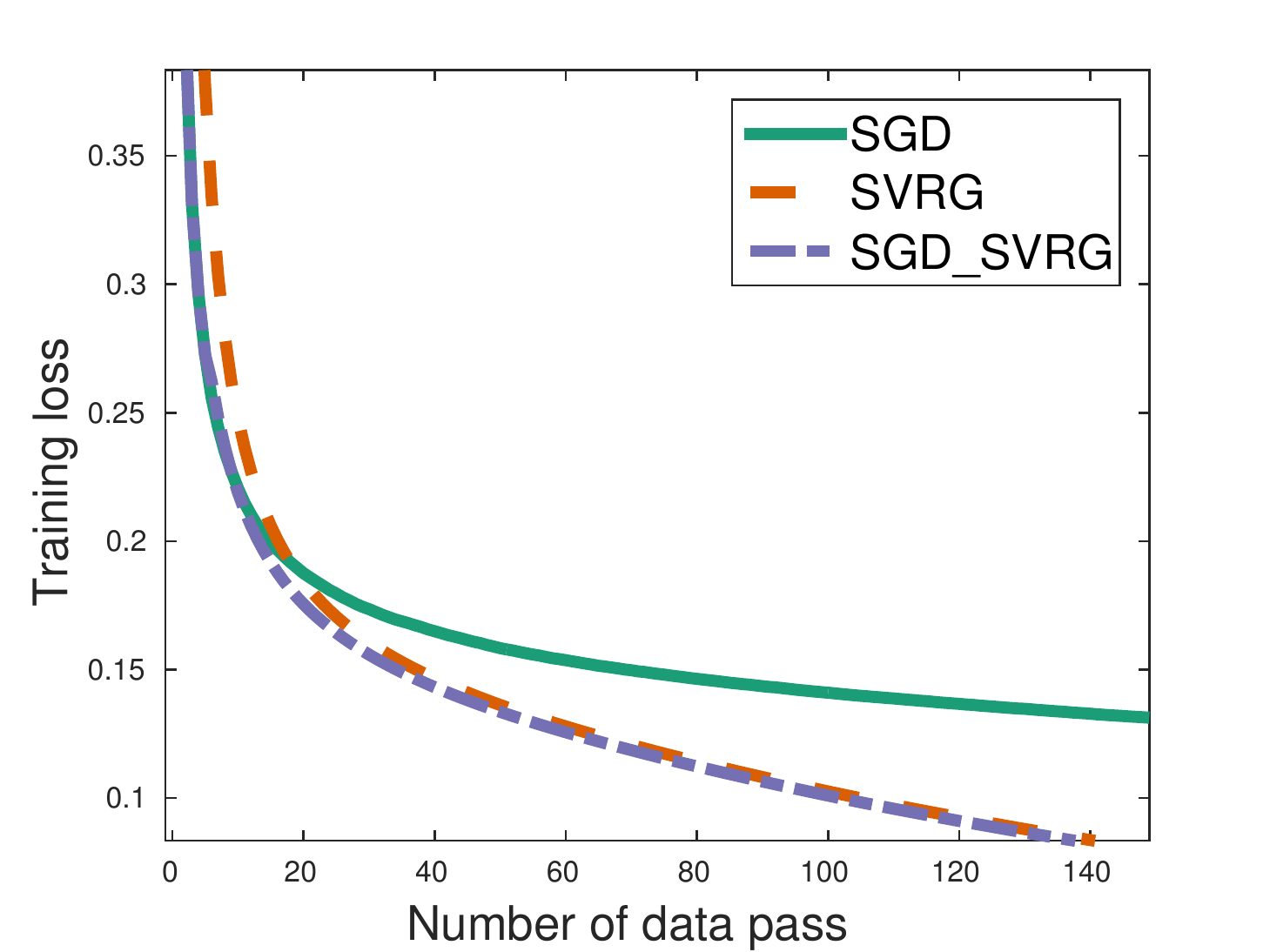}
\caption{}
\label{ompcomp_trainloss}
\end{subfigure}
\begin{subfigure}[b]{0.32\textwidth}
\centering
\includegraphics[width=1.7in]{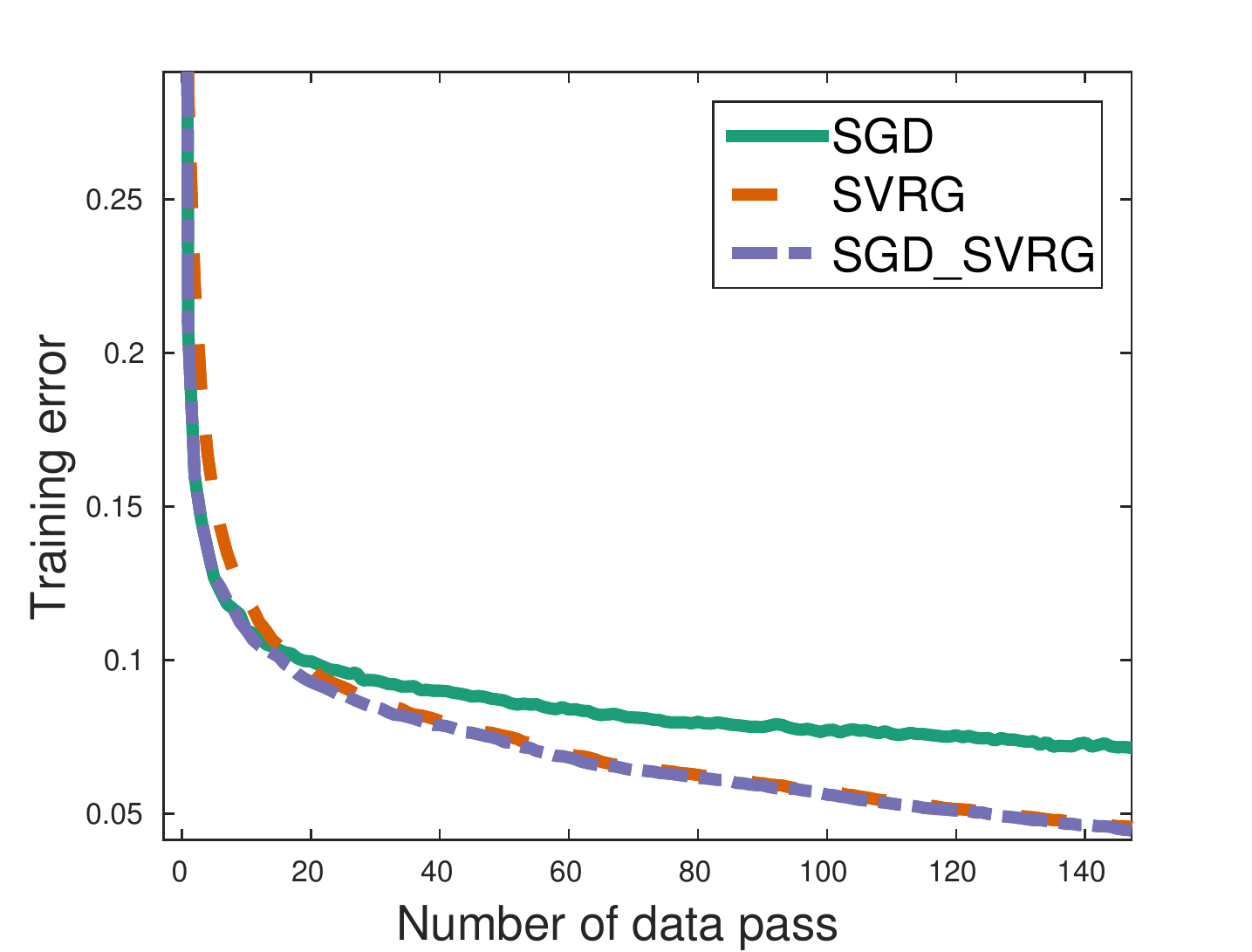}
\caption{}
\label{ompcomp_trainerror}
\end{subfigure}
\begin{subfigure}[b]{0.32\textwidth}
\centering
\includegraphics[width=1.7in]{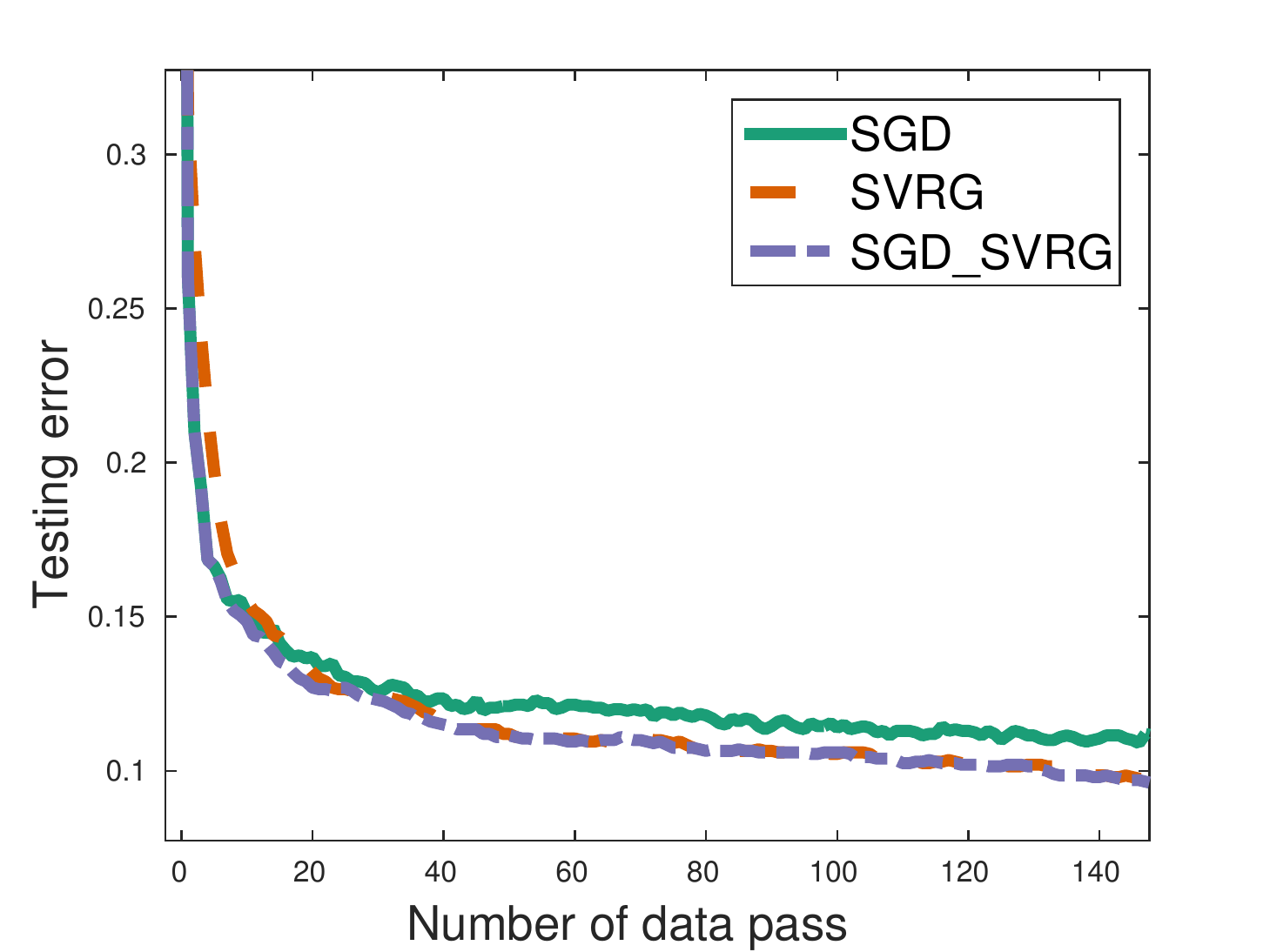}
\caption{}
\label{ompcomp_testerror}
\end{subfigure}
\caption{Comparison of three methods (SGD, SVRG, SGD$\_$SVRG) on MNIST dataset. Figure \ref{ompcomp_trainloss} shows the convergence of loss function value on training dataset.
Figure \ref{ompcomp_trainloss} shows the convergence of training error rate and Figure \ref{ompcomp_testerror} shows the convergence of test error rate. }
\label{ompcomp}
\end{figure}

\begin{figure}[h]
\centering
\begin{subfigure}[b]{0.32\textwidth}
\centering
\includegraphics[width=1.7in]{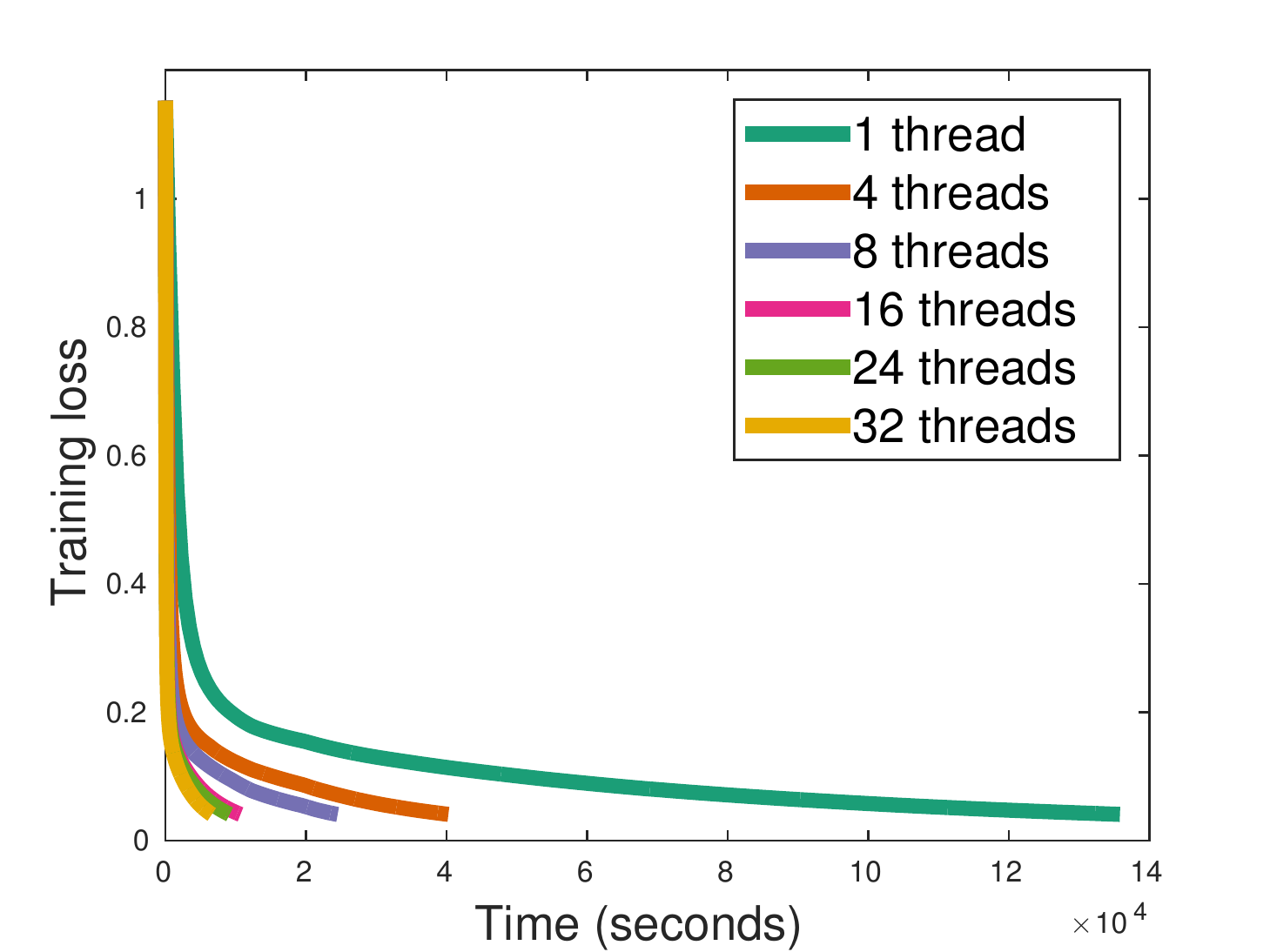}
\caption{}
\label{threads_trainloss}
\end{subfigure}
%\begin{subfigure}[b]{0.3\textwidth}
%\centering
%\includegraphics[width=1.8in]{shared_trainerror}
%\caption{}
%\label{threads_trainerror}
%\end{subfigure}
\begin{subfigure}[b]{0.32\textwidth}
\centering
\includegraphics[width=1.7in]{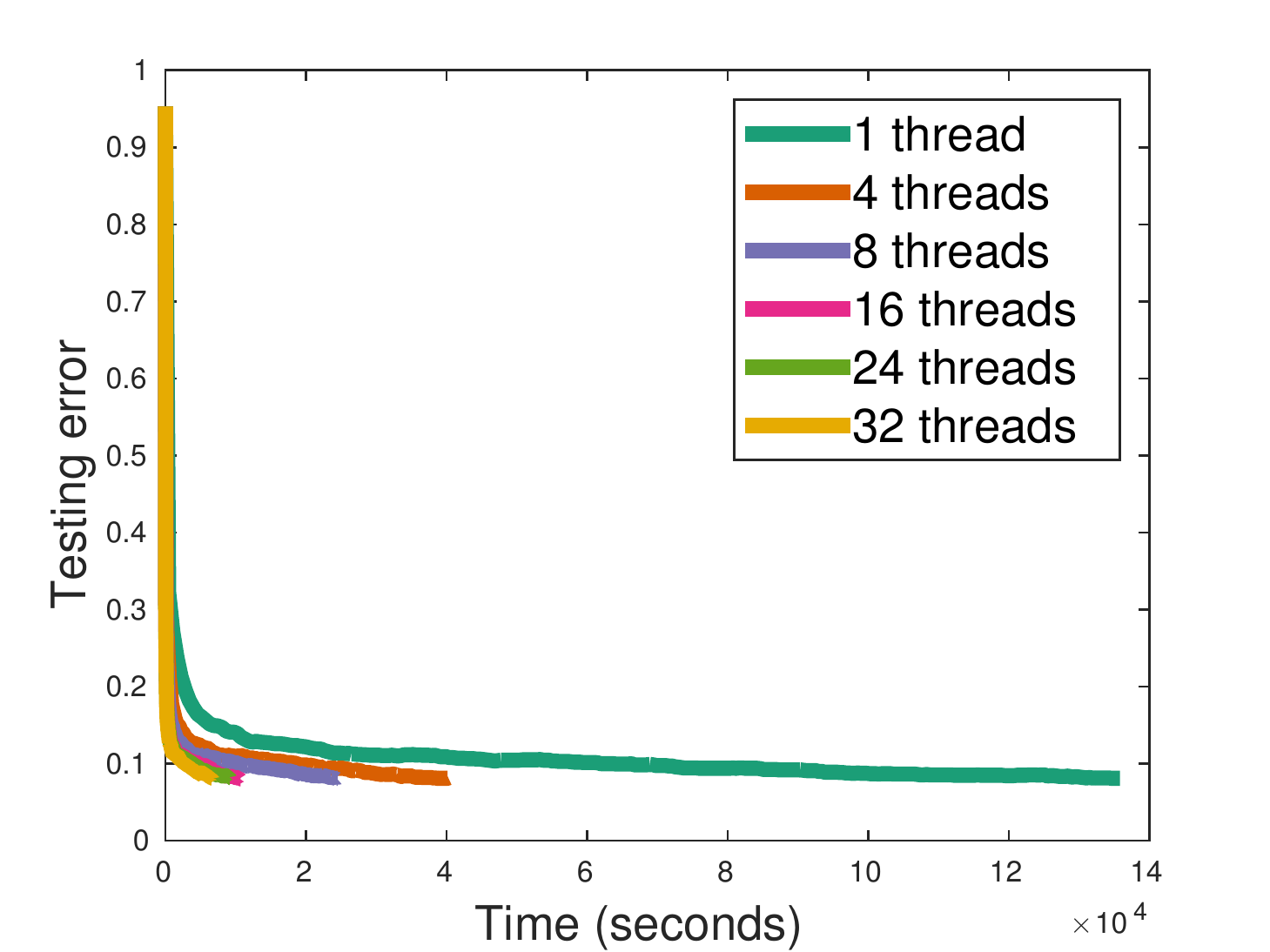}
\caption{}
\label{threads_testerror}
\end{subfigure}
\begin{subfigure}[b]{0.32\textwidth}
\centering
\includegraphics[width=1.7in]{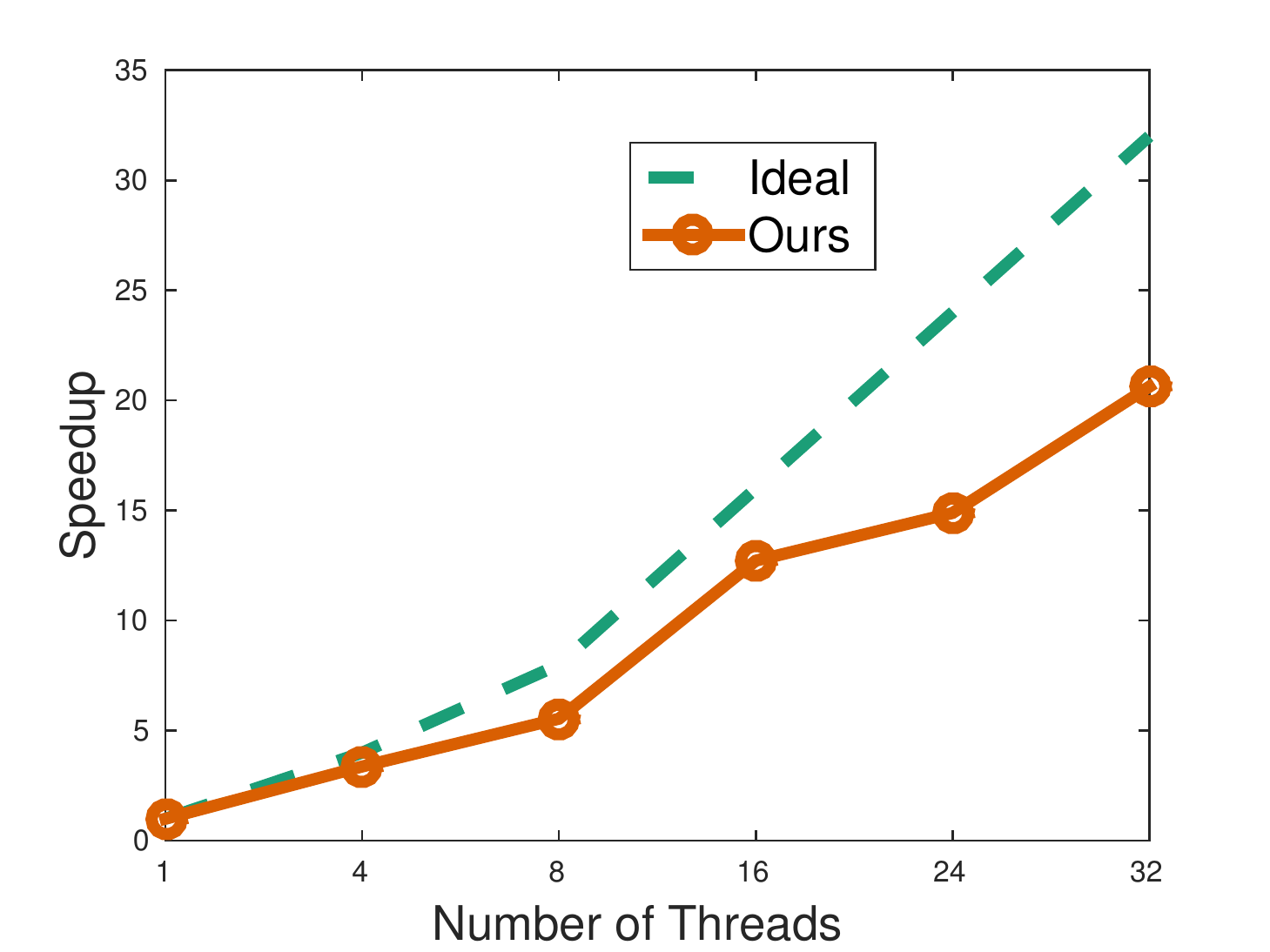}
\caption{}
\label{threads_speedup}
\end{subfigure}
\caption{Asynchronous stochastic gradient descent method with variance reduction runs on a machine using different number of threads from $1$ to $32$. The curves in Figure
\ref{threads_trainloss} shows the convergence of training loss value with respect to time.  The curves in Figure \ref{threads_testerror} shows
the convergence of error rate on testing data. Figure \ref{threads_speedup} represents the running time speedup when we use different workers,
where the dashed line represents ideal linear speedup.
}
\label{threads}
\end{figure}

\subsection{Distributed-memory Architecture}
We conduct distributed-memory architecture experiment on Amazon AWS platform\footnote{https://aws.amazon.com/}, and each node is a t2.micro instance with one CPU. Each server and
worker takes a single node. The point to point communication between server and workers are handled by MPICH library\footnote{http://www.mpich.org/}. CIFAR-10
dataset \cite{krizhevsky2009learning} has $10$ classes of color image $32 \times 32 \times 3$. We use $20000$ samples as training data and $4000$ samples as testing data. We use a
pre-trained CNN model in TensorFlow tutorial \cite{abadi2016tensorflow}, and extract features from second fully connected layer. Thus, each sample is a vector of size $384$. We construct
a three-layer fully connected neural network $(384 \times 50 \times 10)$. We train  this model with softmax loss function, and $\ell_2$ regularization
with weight $C=1e^{-4}$. In this experiment, mini-batch size $|I_t| = 10$, and the inner loop length $m=2,000$. Similar to the compared methods in shared-memory architecture, we implement
SGD method with polynomial learning rate, SVRG with constant learning rate. $\text{SGD}\_\text{SVRG}$ method is initialized with parameters learned after $1$ epoch of SGD method.

At first, we train our model on CIFAR-10 dataset with three compared methods, and each method is with a best tuned learning rate. Performances of all three methods
are presented in Figure \ref{mpicomp}. In Figure
\ref{mpicomp_trainloss}, the curves show that SGD is fast in the first few iterations, and then, SVRG-based method will outperform it due to learning rate issue. As mentioned in
\cite{reddi2016stochastic}, SVRG is more sensitive than SGD to the initial point, so using a pre-trained model is really helpful. It is obvious that SGD$\_$SVRG has better convergence rate than
SVRG method. We can also draw the same conclusion from training error curves with respect to data passes in  Figure \ref{mpicomp_trainerror}. Figure \ref{mpicomp_testerror} represents
that the test error performances of three compared methods are comparable.

\begin{figure}[h]
\centering
\begin{subfigure}[b]{0.32\textwidth}
\centering
\includegraphics[width=1.7in]{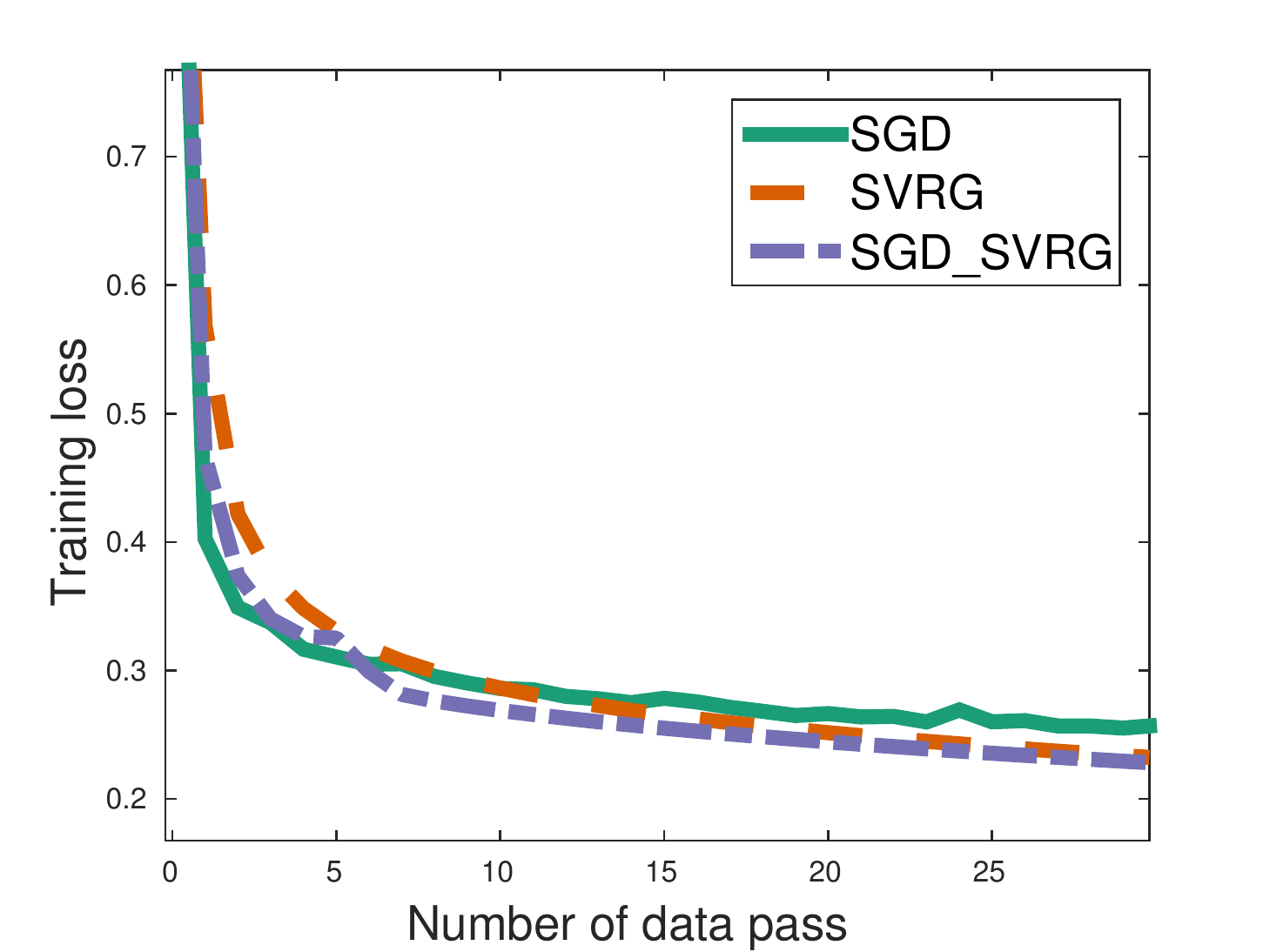}
\caption{}
\label{mpicomp_trainloss}
\end{subfigure}
\begin{subfigure}[b]{0.32\textwidth}
\centering
\includegraphics[width=1.7in]{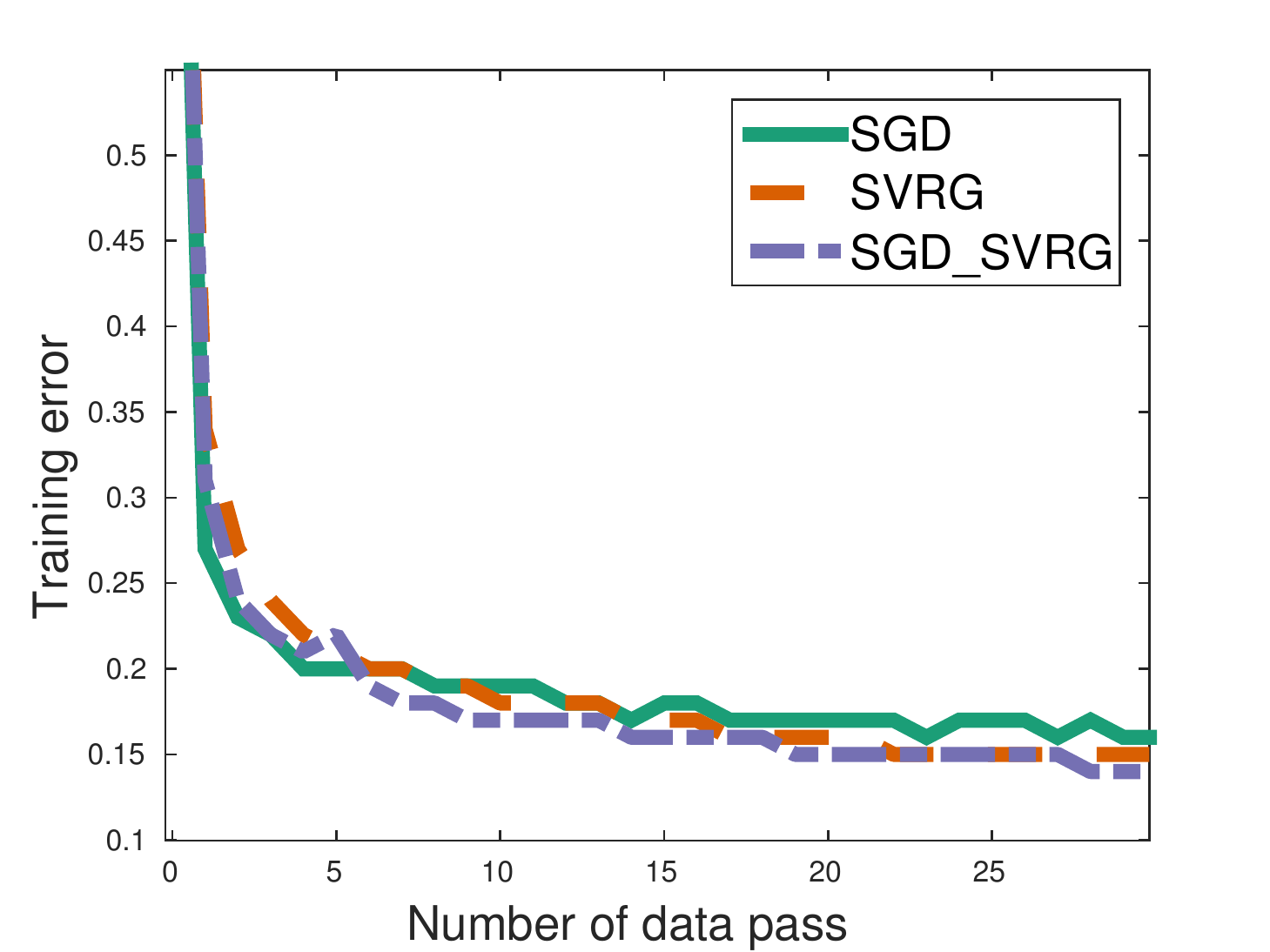}
\caption{}
\label{mpicomp_trainerror}
\end{subfigure}
\begin{subfigure}[b]{0.32\textwidth}
\centering
\includegraphics[width=1.7in]{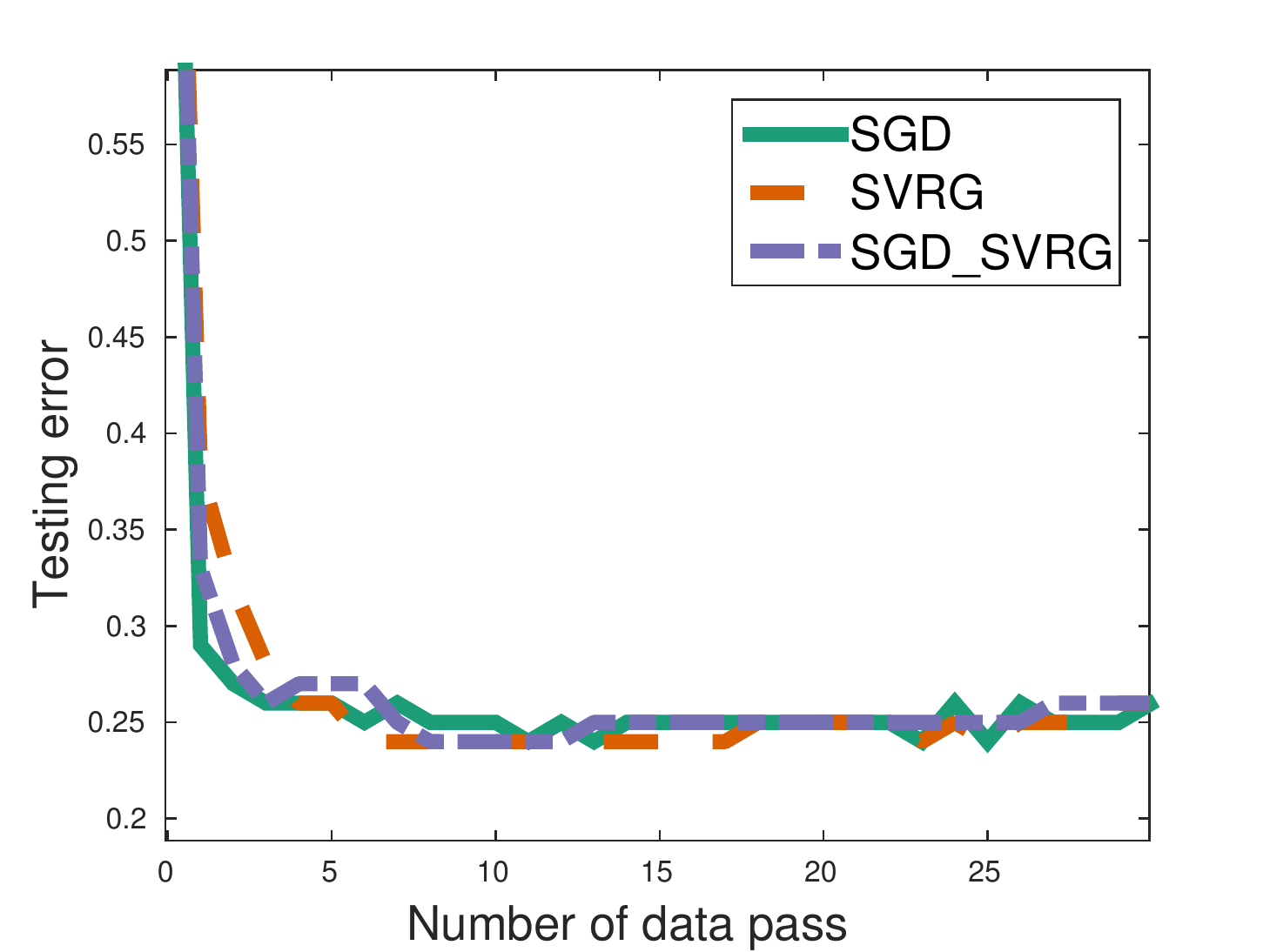}
\caption{}
\label{mpicomp_testerror}
\end{subfigure}
\caption{Comparison of three methods (SGD, SVRG, SGD$\_$SVRG) on CIFAR-10 dataset. Figure \ref{mpicomp_trainloss} shows loss function value on training dataset.
Figure \ref{mpicomp_trainloss} shows the training error and Figure \ref{mpicomp_testerror} shows the test error. }
\label{mpicomp}
\end{figure}

We also test SVRG method with different number of workers, and Figure \ref{mpi} illustrates the results of our experiment. It is easy to draw a conclusion that when the number of
workers increases, we can get a near linear speedup, and when the number gets larger, the speedup tends to be worse.

\begin{figure}[h]
\centering
\begin{subfigure}[b]{0.32\textwidth}
\centering
\includegraphics[width=1.7in]{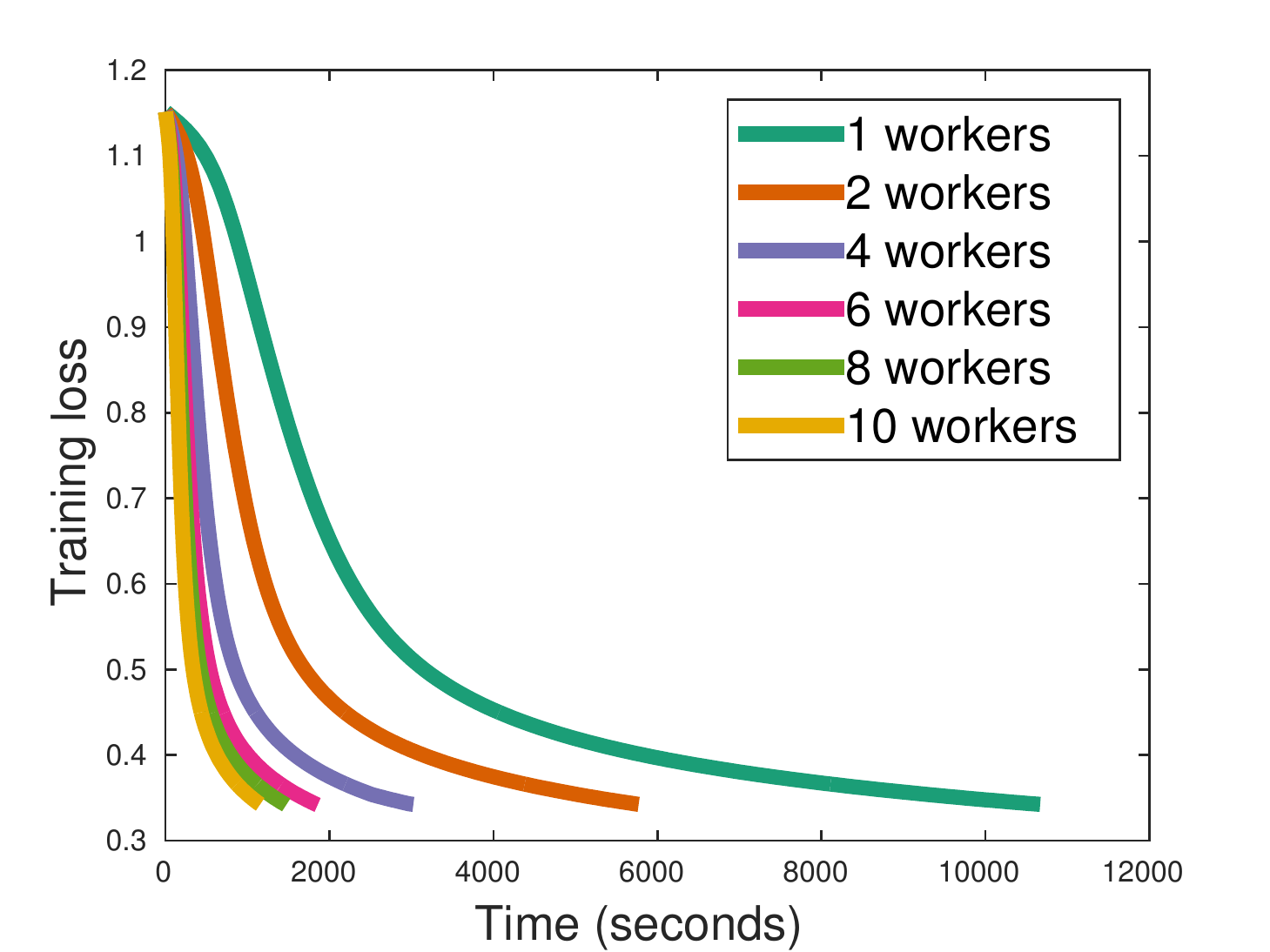}
\caption{}
\label{mpi_trainloss}
\end{subfigure}
%\begin{subfigure}[b]{0.32\textwidth}
%\centering
%\includegraphics[width=1.8in]{distribute_trainerror}
%\caption{}
%\label{mpi_trainerror}
%\end{subfigure}
\begin{subfigure}[b]{0.32\textwidth}
\centering
\includegraphics[width=1.7in]{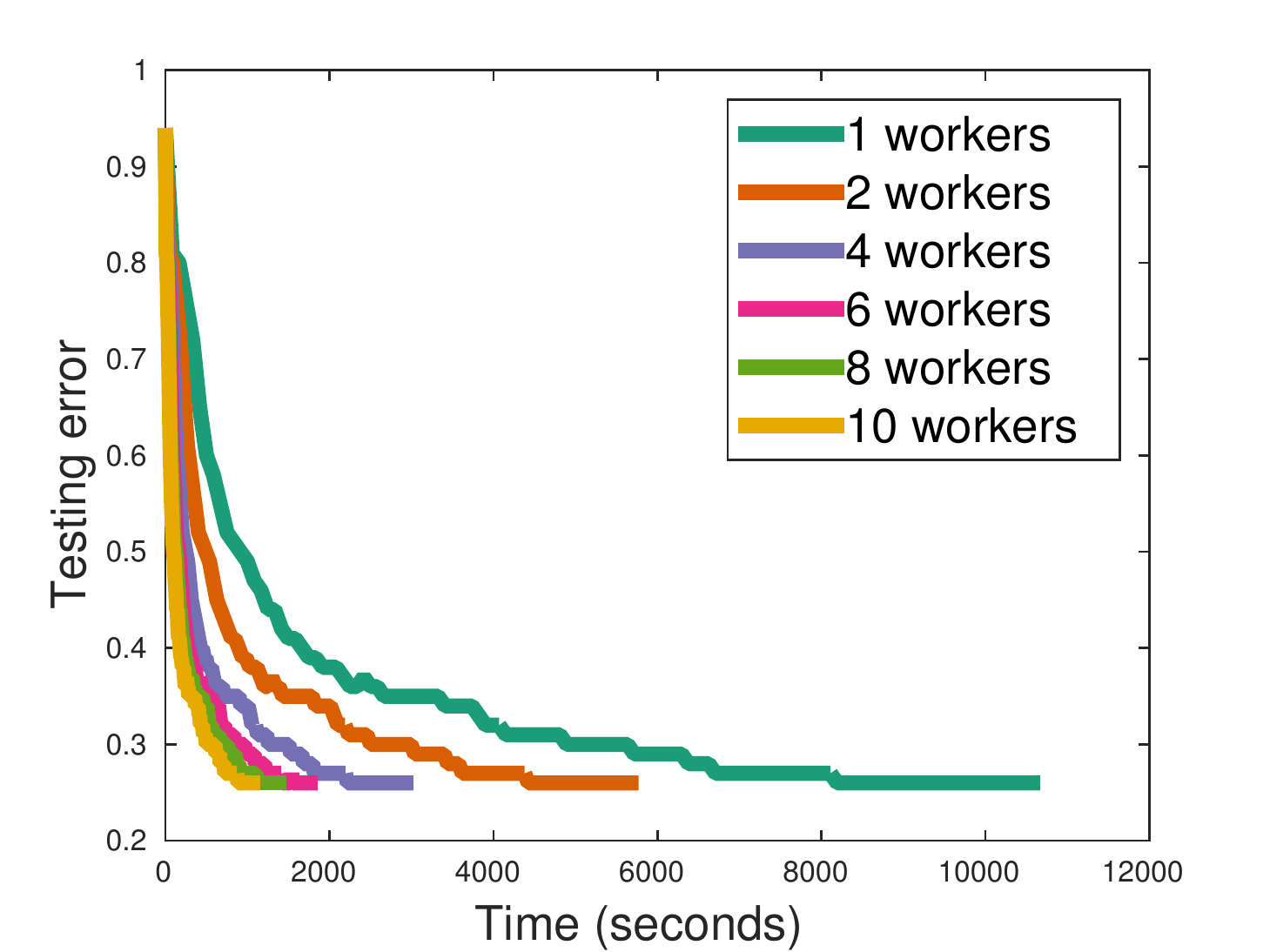}
\caption{}
\label{mpi_testerror}
\end{subfigure}
\begin{subfigure}[b]{0.32\textwidth}
\centering
\includegraphics[width=1.7in]{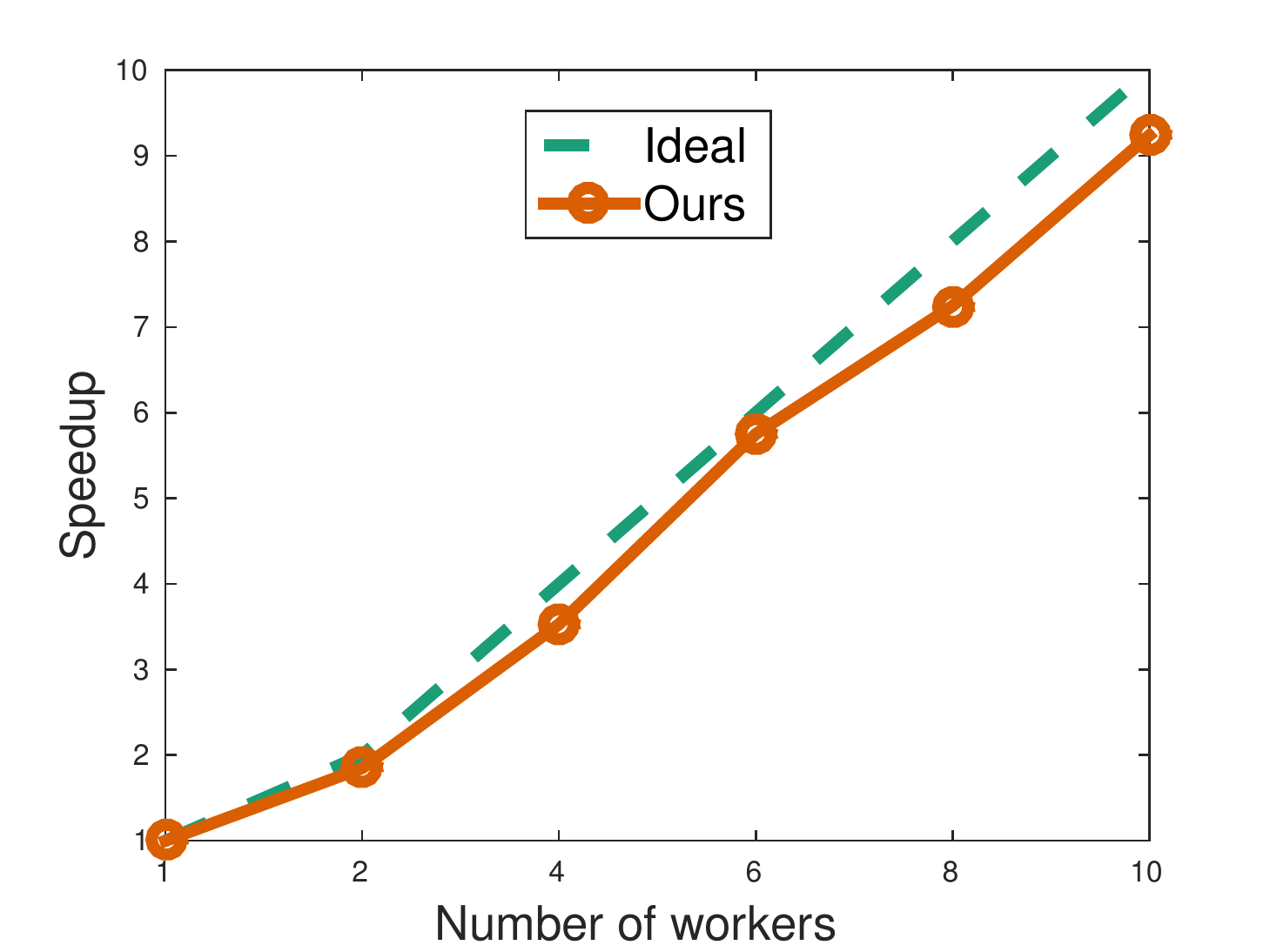}
\caption{}
\label{mpi_speedup}
\end{subfigure}
\caption{Asynchronous stochastic gradient descent method with variance reduction runs on multiple machines from $1$ to $10$. The curves in Figure
\ref{mpi_trainloss} shows the convergence of training loss value with respect to time. The curves in Figure \ref{mpi_testerror} shows the
convergence of error rate on testing data.
Figure \ref{mpi_speedup} represents the running time speedup when using different workers, where the dashed line denotes ideal linear
speedup.
}
\label{mpi}
\end{figure}

\section{Conclusion}
 In this paper, we propose and analyze two different asynchronous stochastic gradient descent with variance reduction for non-convex optimization on two different distributed
 categories, one is shared-memory architecture and the other one is distributed-memory architecture. We analyze their convergence rate and prove that both of them can get an
 ergodic convergence rate $O(1/T)$. Linear speedup is achievable if we increase the number of workers. Experiment results on real dataset also demonstrate our statements.

\bibliographystyle{plain}
\bibliography{ArxivVersion}  % sigproc.bib is the name of the Bibliography in this case

\clearpage
\appendix

\section{Proof of Corollary \ref{lem4}}
\begin{proof}[Proof of Corollary \ref{lem4}]
As per the definitions of $v_t^{s+1}$ (\ref{shared_v_2}) and $u_t^{s+1}$ (\ref{shared_u_2}):
\begin{eqnarray}
\mathbb{E}\left[ ||v_{t}^{s+1}||^2 \right] &=&  \mathbb{E}\left[ || v_{t}^{s+1} - u_t^{s+1} + u_t^{s+1}||^2 \right]  \nonumber \\
&\leq&  2 \mathbb{E}\left[ ||v_{t}^{s+1} - u_t^{s+1} ||^2 \right] +  2 \mathbb{E}\left[ ||u_t^{s+1}||^2 \right] \nonumber \\
&=& 2 \mathbb{E}\left[ || \frac{1}{b}\sum\limits_{i_t \in I_t} \nabla f_{i_t}(\hat x_{t,i_t}^{s+1}) - \nabla f_{i_t}(x_t^{s+1})||^2 \right] +  2\mathbb{E}\left[ ||u_t^{s+1} ||^2 \right] \nonumber \\
&\leq& \frac{ 2L^2}{b} \sum\limits_{i_t \in I_t}  \mathbb{E}\left[ || \hat x_{t,i_t}^{s+1} - x_t^{s+1}||^2 \right] +  2\mathbb{E}\left[ ||u_t^{s+1} ||^2 \right] \nonumber \\
&\leq& \frac{2L^2}{b} \sum\limits_{i_t \in I_t}  \mathbb{E}\left[ || \sum\limits_{j \in J(t,i_t)} (x_{j}^{s+1} - x_{j+1}^{s+1})_{k_j}||^2 \right] +  2\mathbb{E}\left[ ||u_t^{s+1} ||^2 \right] \nonumber \\
&\leq& \frac{2L^2 \Delta \eta^2}{bd} \sum\limits_{i_t \in I_t}  \sum\limits_{j \in J(t,i_t)} \mathbb{E}\left[ ||v_j^{s+1}||^2 \right] +  2\mathbb{E}\left[ ||u_t^{s+1} ||^2 \right] \nonumber \\
\end{eqnarray}
where the first, third and last inequality follows from $||a_1+...+a_n||^2 \leq n\sum\limits_{i=1}^n||a_i||^2 $. Second inequality follows from Lipschitz smoothness of $f(x)$.  Then sum over $\mathbb{E}\left[ ||v_{t}^{s+1}||^2 \right] $ in one epoch, we get the following inequality,

\begin{eqnarray}
\sum\limits_{t=0}^{m-1} \mathbb{E}\left[ ||v_{t}^{s+1}||^2 \right]  &\leq& \sum\limits_{t=0}^{m-1} \left[  \frac{2L^2 \Delta \eta^2}{bd} \sum\limits_{i_t \in I_t}  \sum\limits_{j \in J(t,i_t)} \mathbb{E}\left[ ||v_j^{s+1}||^2 \right]  +  2\mathbb{E}\left[ ||u_t^{s+1} ||^2 \right]  \right] \nonumber \\
&\leq & \frac{ 2L^2 \Delta^2 \eta^2 }{d} \sum\limits_{t=0}^{m-1}   \mathbb{E} \left[ ||v_t^{s+1}||^2 \right] + 2 \sum\limits_{t=0}^{m-1} \mathbb{E}\left[ ||u_t^{s+1} ||^2 \right]
\end{eqnarray}

Thus, if  $d - 2L^2\Delta^2\eta^2 > 0$, then $||v_t^{s+1}||^2$ is upper bounded by $||u_t^{s+1}||^2$,
\begin{eqnarray}
\sum\limits_{t=0}^{m-1} \mathbb{E}\left[ ||v_{t}^{s+1}||^2 \right]  &\leq& \frac{2d}{d-2L^2 \Delta^2 \eta^2}  \sum\limits_{t=0}^{m-1}  \mathbb{E}\left[ ||u_t^{s+1} ||^2 \right]
\end{eqnarray}

We follow the proof in \cite{reddi2015variance}, however, because our update step is different, our result is also a little different.

\iffalse
Besides, we can have a upper bound of  $\mathbb{E}\left[ ||u_{t}^{s+1}||^2 \right]$,
\begin{align}
\mathbb{E}  \left[ ||u_{t}^{s+1}||^2 \right]& = \mathbb{E}\left[ || \frac{1}{b}\sum\limits_{i_t \in I_t} \left( \nabla f_{i_t}(x_{t}^{s+1}) - \nabla f_{i_t}(\tilde{x}^s) + \nabla f(\tilde{x}^s) \right) ||^2  \right]  \nonumber \\
&\leq   2\mathbb{E}\left[ || \frac{1}{b}\sum\limits_{i_t \in I_t} \left( \nabla f_{i_t}(x_{t}^{s+1}) - \nabla f_{i_t}(\tilde{x}^s) \right)  - \frac{1}{b}\sum\limits_{i_t \in I_t} \left(  \nabla f(x_{t}^{s+1}) - \nabla f(\tilde{x}^s)  \right)||^2 \right] \nonumber \\
&+  2 \mathbb{E}\left[ ||  \nabla f(x_{t}^{s+1})||^2 \right] \nonumber \\
&\leq 2 \mathbb{E}\left[ ||  \nabla f(x_{t}^{s+1})||^2 \right]  +   2\mathbb{E}\left[ || \frac{1}{b}\sum\limits_{i_t \in I_t} \left(   \nabla f_{i_t}(x_{t}^{s+1}) - \nabla f_{i_t}(\tilde{x}^s) \right) ||^2 \right] \nonumber \\
&\leq 2 \mathbb{E}\left[ ||  \nabla f(x_{t}^{s+1})||^2 \right]  +  \frac{ 2L^2}{b}\mathbb{E}\left[ || x_{t}^{s+1}- \tilde{x}^s ||^2 \right] \nonumber \\
\end{align}
where third inequality follows from $\mathbb{E} \left[ ||\xi -\mathbb{E}\left[ \xi \right]||^2 \right] \leq \mathbb{E} \left[ ||\xi||^2 \right] $.
From Corollary \ref{lem2}, we know that
\begin{eqnarray}
 \mathbb{E}\left[ ||u_{t}^{s+1}||^2 \right] \leq 2 \mathbb{E}\left[ ||  \nabla f(x_{t}^{s+1})||^2 \right]  +  \frac{ 2L^2}{b}\mathbb{E}\left[ || x_{t}^{s+1}- \tilde{x}^s ||^2 \right]
\end{eqnarray}
\fi
\end{proof}

\section{Proof of Theorem \ref{thm_m_3}}

\begin{proof}[Proof of Theorem \ref{thm_m_3}]

At first, we derive the upper bound of $\mathbb{E}\left[ ||x_{t+1}^{s+1} - \tilde{x}^s ||^2 \right]$:
\begin{align}
\label{shared_x2_inequality}
\mathbb{E} & \left[ ||x_{t+1}^{s+1} - \tilde{x}^s ||^2 \right]  = \mathbb{E} \left[ ||x_{t+1}^{s+1} - x_t^{s+1} + x_t^{s+1} - \tilde{x}^s ||^2  \right] \nonumber \\
&= \mathbb{E}\left[  ||x_{t+1}^{s+1} -x_t^{s+1} ||^2 + ||x_t^{s+1} - \tilde{x}^s ||^2 + 2\left<x_{t+1}^{s+1} - x_{t}^{s+1}, x_t^{s+1}-\tilde{x}^s\right> \right] \nonumber \\
&= \mathbb{E} \left[ \frac{\eta^2}{d} ||v_t^{s+1} ||^2 + ||x_t^{s+1} - \tilde{x}^s||^2 - \frac{2\eta}{d} \left<  \frac{1}{b} \sum\limits_{i_t \in I_t}  \nabla f(\hat x_{t,i_t}^{s+1} ), x_t^{s+1} - \tilde{x}^s \right>  \right] \nonumber \\
&\leq  \frac{\eta^2}{d}\mathbb{E} \left[ ||v_t^{s+1}||^2 \right] + \frac{2\eta}{d} \mathbb{E} \left[ \frac{1}{2\beta_t}|| \frac{1}{b} \sum\limits_{i_t \in I_t}  \nabla f(\hat x_{t,i_t}^{s+1} )||^2 + \frac{\beta_t}{2}||x_t^{s+1} - \tilde{x}^s||^2 \right] \nonumber \\
& +   \mathbb{E} \left[ ||x_t^{s+1} - \tilde{x}^s||^2 \right] \nonumber \\
&= \frac{\eta^2}{d}\mathbb{E} \left[ ||v_t^{s+1}||^2 \right]   + \frac{\eta}{d \beta_t} \mathbb{E} \left[ || \frac{1}{b} \sum\limits_{i_t \in I_t}  \nabla f(\hat x_{t,i_t}^{s+1} )||^2 \right] + (1+ \frac{\eta \beta_t}{d})  \mathbb{E} \left[ ||x_t^{s+1} - \tilde{x}^s||^2 \right]
\end{align}

where the inequality follows from $ \left<a,b\right> \leq \frac{1}{2}(a^2 + b^2)$. Then we know that $\mathbb{E}\left[ f(x_{t+1}^{s+1}) \right]  $ is also upper bounded:
 \begin{align}
 \label{shared_f}
\mathbb{E}\left[ f(x_{t+1}^{s+1}) \right] & \leq \mathbb{E}\left[ f(x_t^{s+1}) + \left< \nabla f(x_t^{s+1}) , x_{t+1}^{s+1}-x_t^{s+1} \right> + \frac{L}{2} ||  x_{t+1}^{s+1} - x_t^{s+1}||^2 \right] \nonumber \\
&= \mathbb{E}\left[ f(x_t^{s+1}) \right] - \frac{\eta}{d} \mathbb{E}\left[ \left< \nabla f(x_t^{s+1}),  \frac{1}{b} \sum\limits_{i_t \in I_t}  \nabla f(\hat x_{t,i_t}^{s+1} ) \right> \right] + \frac{\eta^2 L}{2d} \mathbb{E}\left[ ||v_{t}^{s+1}||^2\right] \nonumber \\
&= \mathbb{E}\left[ f(x_t^{s+1}) \right] - \frac{\eta}{2d} \mathbb{E}\biggl[  ||\nabla f(x_t^{s+1})||^2 +  ||\frac{1}{b} \sum\limits_{i_t \in I_t}  \nabla f(\hat x_{t,i_t}^{s+1} )||^2 \nonumber \\
&- ||\nabla f(x_t^{s+1}) -\frac{1}{b} \sum\limits_{i_t \in I_t}  \nabla f(\hat x_{t,i_t}^{s+1} ) ||^2   \biggr] + \frac{\eta^2 L}{2d} \mathbb{E}\left[ ||v_{t}^{s+1}||^2\right] \nonumber \\
\end{align}
where the first inequality follows from Lipschitz continuity of $f(x)$.

 \begin{eqnarray}
 \label{shared_f_t1}
\mathbb{E} \left[ ||\nabla f(x_t^{s+1}) - \frac{1}{b} \sum\limits_{i_t \in I_t}  \nabla f(\hat x_{t,i_t}^{s+1} )||^2 \right] &\leq&  \frac{ L^2}{b} \sum\limits_{i_t \in I_t}
\mathbb{E} \left[ ||x_t^{s+1} - \hat x_{t,i_t}^{s+1}||^2 \right] \nonumber \\
 &=& \frac{L^2}{b} \sum\limits_{i_t \in I_t}  \mathbb{E} \left[ ||\sum\limits_{j \in J(t,i_t)} (x_j^{s+1} - x_{j+1}^{s+1}) ||^2 \right] \nonumber \\
 &\leq& \frac{ L^2 \Delta}{b} \sum\limits_{i_t \in I_t}   \sum\limits_{j \in J(t,i_t)} \mathbb{E} \left[ ||x_j^{s+1} - x_{j+1}^{s+1}||^2 \right] \nonumber \\
 &\leq& \frac{L^2  \Delta  \eta^2}{bd} \sum\limits_{i_t \in I_t}  \sum\limits_{j \in J(t,i_t)} \mathbb{E}\left[ ||v_{j}^{s+1}||^2\right]
\end{eqnarray}
where the first inequality follows from Lipschitz continuity of $f(x)$. $\Delta$ denotes the upper bound of  time delay. From (\ref{shared_f}) and (\ref{shared_f_t1}), it is to
derive the following inequality:
\begin{eqnarray}
\label{shared_f_inequality}
\mathbb{E}\left[ f(x_{t+1}^{s+1}) \right]  \leq \mathbb{E}\left[ f(x_t^{s+1}) \right] - \frac{\eta}{2d} \mathbb{E}\left[
||\nabla f(x_t^{s+1})||^2 \right] -  \frac{\eta}{2d} \mathbb{E}\left[  ||\frac{1}{b} \sum\limits_{i_t \in I_t}  \nabla f(\hat x_{t,i_t}^{s+1} )||^2 \right]  \nonumber \\
+ \frac{\eta^2L}{2d} \mathbb{E}\left[ ||v_{t}^{s+1}||^2 \right]  + \frac{L^2 \Delta \eta^3}{2bd^2} \sum\limits_{i_t \in I_t}  \sum\limits_{j \in J(t,i_t)} \mathbb{E}\left[ ||v_{j}^{s+1}||^2\right]
\end{eqnarray}

Following the proof in \cite{reddi2016stochastic}, we define Lyapunov function (this nice proof approach was first introduced in \cite{reddi2016stochastic}):
\begin{eqnarray}
\label{lyapunov}
R_t^{s+1} = \mathbb{E}\left[ f(x_t^{s+1}) + c_t ||x_t^{s+1} - \tilde{x}^s ||^2 \right]\,.
\end{eqnarray}

From the definition of  Lyapunov function, and inequalities in (\ref{shared_x2_inequality}) and (\ref{shared_f_inequality}):
\begin{align}
R_{t+1}^{s+1} &= \mathbb{E}\left[ f(x_{t+1}^{s+1}) + c_{t+1} ||x_{t+1}^{s+1} - \tilde{x}^s ||^2 \right] \nonumber \\
&\leq \mathbb{E}\left[ f(x_t^{s+1}) \right] - \frac{\eta}{2d} \mathbb{E}\left[
||\nabla f(x_t^{s+1})||^2 \right] -  \frac{\eta}{2d} \mathbb{E}\left[  ||\frac{1}{b} \sum\limits_{i_t \in I_t}  \nabla f(\hat x_{t,i_t}^{s+1} )||^2 \right]  \nonumber \\
&+ \frac{\eta^2L}{2d} \mathbb{E}\left[ ||v_{t}^{s+1}||^2 \right]  +  \frac{L^2 \Delta \eta^3}{2bd^2} \sum\limits_{i_t \in I_t}  \sum\limits_{j \in J(t,i_t)} \mathbb{E}\left[ ||v_{j}^{s+1}||^2\right] \nonumber \\
&+ c_{t+1} \left[  \frac{\eta^2}{d}\mathbb{E} \left[ ||v_t^{s+1}||^2 \right]  + (1+ \frac{\eta \beta_t}{d})  \mathbb{E} \left[ ||x_t^{s+1} - \tilde{x}^s||^2 \right]  + \frac{\eta}{d \beta_t} \mathbb{E} \left[ ||\frac{1}{b} \sum\limits_{i_t \in I_t}  \nabla f(\hat x_{t,i_t}^{s+1} )||^2 \right]   \right] \nonumber \\
&= \mathbb{E}\left[ f(x_t^{s+1}) \right]  - \frac{\eta}{2d} \mathbb{E}\left[  ||\nabla f(x_t^{s+1})||^2 \right]  - (\frac{\eta}{2d} - \frac{c_{t+1}\eta}{d\beta_t}) \mathbb{E}\left[  ||\frac{1}{b} \sum\limits_{i_t \in I_t}  \nabla f(\hat x_{t,i_t}^{s+1} )||^2 \right] \nonumber \\
&  +   \frac{L^2 \Delta \eta^3}{2bd^2} \sum\limits_{i_t \in I_t}  \sum\limits_{j \in J(t,i_t)} \mathbb{E}\left[ ||v_{j}^{s+1}||^2\right]   + (\frac{\eta^2L}{2d} + \frac{c_{t+1}\eta^2}{d})  \mathbb{E}\left[ ||v_{t}^{s+1}||^2 \right] \nonumber \\
&+ c_{t+1}(1+ \frac{\eta\beta_t}{d})  \mathbb{E} \left[ ||x_t^{s+1} - \tilde{x}^s||^2 \right]  \nonumber \\
&\leq  \mathbb{E}\left[ f(x_t^{s+1}) \right]  - \frac{\eta}{2d} \mathbb{E}\left[  ||\nabla f(x_t^{s+1})||^2 \right] +  \frac{L^2 \Delta \eta^3}{2bd^2} \sum\limits_{i_t \in I_t}  \sum\limits_{j \in J(t,i_t)} \mathbb{E}\left[ ||v_{j}^{s+1}||^2\right] \nonumber \\
&+ (\frac{\eta^2L}{2d} + \frac{c_{t+1}\eta^2}{d})  \mathbb{E}\left[ ||v_{t}^{s+1}||^2 \right] + c_{t+1}(1+ \frac{\eta\beta_t}{d})  \mathbb{E} \left[ ||x_t^{s+1} - \tilde{x}^s||^2 \right]  \nonumber \\
\end{align}

In the final inequality, we assume $\frac{1}{2} \geq \frac{c_{t+1}}{\beta_t}$. As per Corollary \ref{lem4}, we sum up $R_{t+1}^{s+1}$ from $t=0$ to $m-1$,
\begin{align}
\sum\limits_{t=0}^{m-1} R_{t+1}^{s+1} & \leq  \sum\limits_{t=0}^{m-1} \biggl[ \mathbb{E}\left[ f(x_t^{s+1}) \right]  - \frac{\eta}{2d} \mathbb{E}\left[  ||\nabla f(x_t^{s+1})||^2 \right]
    + \frac{L^2 \Delta \eta^3}{2bd^2} \sum\limits_{i_t \in I_t}  \sum\limits_{j \in J(t,i_t)} \mathbb{E}\left[ ||v_{j}^{s+1}||^2\right] \nonumber \\
&+ (\frac{\eta^2L}{2d} + \frac{c_{t+1}\eta^2}{d})  \mathbb{E}\left[ ||v_{t}^{s+1}||^2 \right] +  c_{t+1}(1+ \frac{\eta\beta_t}{d})  \mathbb{E} \left[ ||x_t^{s+1} - \tilde{x}^s||^2 \right] \biggr] \nonumber \\
& \leq  \sum\limits_{t=0}^{m-1} \biggl[ \mathbb{E}\left[ f(x_t^{s+1}) \right]  - \frac{\eta}{2d} \mathbb{E}\left[  ||\nabla f(x_t^{s+1})||^2 \right]
  +  c_{t+1}(1 + \frac{\eta\beta_t}{d})  \mathbb{E} \left[ ||x_t^{s+1} - \tilde{x}^s||^2 \right] \nonumber \\
&  + (\frac{L^2 \Delta^2 \eta^3}{2d^2} +  \frac{\eta^2L}{2d} + \frac{c_{t+1}\eta^2}{d})  \mathbb{E}\left[ ||v_{t}^{s+1}||^2 \right]    \biggr] \nonumber\\
 & \leq  \sum\limits_{t=0}^{m-1} \biggl[ \mathbb{E}\left[ f(x_t^{s+1}) \right]  - \frac{\eta}{2d} \mathbb{E}\left[  ||\nabla f(x_t^{s+1})||^2 \right]
  +  c_{t+1}(1+ \frac{\eta\beta_t}{d})  \mathbb{E} \left[ ||x_t^{s+1} - \tilde{x}^s||^2 \right]  \nonumber  \\
& +  \frac{2d}{d-2L^2 \Delta^2 \eta^2}(\frac{L^2 \Delta^2 \eta^3}{2d^2} + \frac{\eta^2L}{2d} + \frac{c_{t+1}\eta^2}{d})   \mathbb{E}\left[ ||u_{t}^{s+1}||^2 \right]    \biggr] \nonumber \\
&  \leq \sum\limits_{t=0}^{m-1} R_{t}^{s+1} - \sum\limits_{t=0}^{m-1} \left[  \Gamma _t   \mathbb{E}\left[  ||\nabla f(x_t^{s+1})||^2 \right] \right]
\end{align}
where
\begin{eqnarray}
c_t = c_{t+1}(1+ \frac{\eta\beta_t}{d}) + \frac{4L^2}{(d-2L^2 \Delta^2 \eta^2)b}  (\frac{L^2 \Delta^2 \eta^3}{2d} + \frac{\eta^2L}{2} + c_{t+1}\eta^2)
\end{eqnarray}

\begin{eqnarray}
 \Gamma _t =  \frac{\eta}{2d} -\frac{4}{d-2L^2 \Delta^2 \eta^2}  (\frac{L^2 \Delta^2 \eta^3}{2d} + \frac{\eta^2L}{2} + c_{t+1}\eta^2)
\end{eqnarray}

Setting $c_{m} = 0$, $\tilde{x}^{s+1} = x^{s+1}_{m}$, and $\gamma = \min \Gamma_t$,  then
$R_{m}^{s+1} =  \mathbb{E}\left[  f(x_m^{s+1}) \right] =  \mathbb{E}\left[  f(\tilde x^{s+1}) \right]$
and $R_{0}^{s+1} =  \mathbb{E}\left[  f(x_0^{s+1}) \right] =  \mathbb{E}\left[  f(\tilde x^{s}) \right]$. Thus we can get,

\begin{eqnarray}
\sum\limits_{t=0}^{m-1} \mathbb{E}\left[  ||\nabla f(x_t^{s+1})||^2 \right]  \leq \frac{\mathbb{E}\left[  f(\tilde x^{s})  -  f(\tilde x^{s+1}) \right] }{\gamma}
\end{eqnarray}

Summing up all epochs, and define $x^0$ as initial point and $x^*$ as optimal solution, we have the final inequality:
\begin{eqnarray}
\frac{1}{T}\sum\limits_{s=0}^{S-1}\sum\limits_{t=0}^{m-1} \mathbb{E}\left[  ||\nabla f(x_t^{s+1})||^2 \right]  \leq \frac{\mathbb{E}\left[  f( x^{0})  -  f( x^{*}) \right] }{T\gamma}
\end{eqnarray}

\end{proof}

\section{Proof of Theorem \ref{thm_m_4}}

\begin{proof}[Proof of Theorem \ref{thm_m_4}]
Following the proof in \cite{reddi2016stochastic}, we set $c_m=0$, $ \eta = \frac{u_0b}{Ln^\alpha}$, $\beta_t = \beta = {2L}$,   $0<u_0<1$, and $0<\alpha<1$.
\begin{eqnarray}
\theta &= &\frac{\eta \beta}{d} + \frac{4L^2\eta^2}{(d-2L^2\Delta^2\eta^2)b} \nonumber \\
&=&  \frac{2u_0b}{dn^{{\alpha}}} + \frac{4u_0^2b}{dn^{2\alpha} - 2\Delta^2u_0^2b^2} \nonumber \\
&\leq&    \frac{6u_0b} {dn^{{\alpha}}}
\end{eqnarray}

In the  final inequality, we constrain that  $dn^{{\alpha}} \leq dn^{2\alpha} - 2\Delta^2u_0^2b^2 $, and it is easy to satisfy when $n$ is large. We set $m = \lfloor  \frac{dn^{{\alpha}}}{6u_0b}  \rfloor$, and from the recurrence formula of $c_t$, we have:
\begin{eqnarray}
c_0& =& \frac{2L^2}{(d-2L^2\Delta^2\eta^2)b} \left( \frac{ L^2\Delta^2\eta^3}{d} + \eta^2L \right) \frac{(1+\theta)^m - 1}{\theta} \nonumber \\
&=& \frac{2L \left( \frac{u_0^3\Delta^2b^3}{n^{3\alpha}} + \frac{u_0^2b^2d}{n^{2\alpha}}  \right)}{ \left(d - 2 L^2 \Delta^2\eta^2  \right)  \left(\frac{2u_0b^2}{dn^{\alpha}} + \frac{4u_0^2b^2}{dn^{2\alpha} - 2\Delta^2u_0^2b^2}   \right)d} \left( (1+\theta)^m -1 \right) \nonumber \\
&\leq& \frac{L(u_0b\Delta^2 +d)}{3d}  \left( (1+\theta)^m -1 \right) \nonumber \\
&\leq & \frac{L(u_0b\Delta^2 +d)}{3d}(e-1) \nonumber \\
\end{eqnarray}
where the final inequality follows from that $(1+\frac{1}{l})^l$ is increasing for $l>0$, and $\lim\limits_{l\rightarrow  \infty}(1 + \frac{1}{l})^l = e$. From the
proof in Theorem \ref{thm_m_3}, we know that $c_0 \leq \frac{\beta}{2} = {L}$,thus $\Delta^2 \leq \frac{d}{2u_0b}$.
 $c_t$ is decreasing with respect to $t$, and $c_0$ is also upper bounded.

\begin{eqnarray}
\gamma&=& \min_t \Gamma_t \nonumber \\
&\geq &  \frac{\eta}{2d} -\frac{4}{d-2L^2 \Delta^2 \eta^2}  (\frac{L^2 \Delta^2 \eta^3}{2d} + \frac{\eta^2L}{2} + c_{0}\eta^2)   \nonumber \\
&\geq &  \frac{\eta}{2d} -\frac{4n^{\alpha}}{d}  (\frac{L^2 \Delta^2 \eta^3}{2d} + \frac{\eta^2L}{2} + c_{0}\eta^2)   \nonumber \\
&\geq&  \left( \frac{1}{2} - \frac{14u_0^2b^2\Delta^2 + 14u_0bd}{3d} \right)\frac{\eta}{d} \nonumber \\
&\geq& \frac{\sigma b}{dLn^{\alpha}}
\end{eqnarray}

There exists a small value $\sigma$, an it is independent of $n$. The final inequality holds if
$\frac{1}{2} > \frac{14u_0^2b^2\Delta^2 + 14u_0bd}{3d}  $.
Above all, if $\Delta^2 < \min \{\frac{d}{2u_0b}, \frac{3d-28u_0bd}{28u_0^2b^2}  \}$, we have the conclusion that,
\begin{eqnarray}
\frac{1}{T}\sum\limits_{s=0}^{S-1}\sum\limits_{t=0}^{m-1} \mathbb{E}\left[  ||\nabla f(x_t^{s+1})||^2 \right]  \leq \frac{dLn^{\alpha}\mathbb{E}\left[  f(\tilde x^{0})  -  f(\tilde x^{*}) \right] }{T \sigma b}
\end{eqnarray}

\end{proof}

\section{Proof of Theorem \ref{thm_m_1}}

\begin{proof}[Proof of Theorem \ref{thm_m_1}]
\begin{align}
&\mathbb{E}\left[ ||x_{t+1}^{s+1} - \tilde{x}^s ||^2 \right]  = \mathbb{E} \left[ ||x_{t+1}^{s+1} - x_t^{s+1} + x_t^{s+1} - \tilde{x}^s ||^2  \right] \nonumber \\
&= \mathbb{E}\left[  ||x_{t+1}^{s+1} -x_t^{s+1} ||^2 + ||x_t^{s+1} - \tilde{x}^s ||^2 + 2\left<x_{t+1}^{s+1} - x_{t}^{s+1}, x_t^{s+1}-\tilde{x}^s\right> \right] \nonumber \\
&= \mathbb{E} \left[ \eta^2 ||v_t^{s+1} ||^2 + ||x_t^{s+1} - \tilde{x}^s||^2 - 2\eta \left< \frac{1}{b}\sum\limits_{i_t \in I_t} \nabla f(x_{t-\tau_i}^{s+1}), x_t^{s+1} - \tilde{x}^s \right>  \right] \nonumber \\
&\leq \eta^2 \mathbb{E} \left[ ||v_t^{s+1}||^2 \right] + 2 \eta \mathbb{E} \left[ \frac{1}{2\beta_t}|| \frac{1}{b}\sum\limits_{i_t \in I_t} \nabla f(x_{t-\tau_i}^{s+1})||^2 + \frac{\beta_t}{2}||x_t^{s+1} - \tilde{x}^s||^2 \right] \nonumber \\
& + \mathbb{E} \left[ ||x_t^{s+1} - \tilde{x}^s||^2 \right]  \nonumber \\
&= \eta^2 \mathbb{E} \left[ ||v_t^{s+1}||^2 \right]  + (1+\eta \beta_t)  \mathbb{E} \left[ ||x_t^{s+1} - \tilde{x}^s||^2 \right]  + \frac{\eta}{\beta_t} \mathbb{E} \left[ ||\frac{1}{b}\sum\limits_{i_t \in I_t} \nabla f(x_{t-\tau_i}^{s+1})||^2 \right]
\end{align}
where the first inequality follows $2\left<a,b\right> \leq ||a||^2 + ||b||^2 $

 \begin{align}
&\mathbb{E}\left[ f(x_{t+1}^{s+1}) \right]  \leq \mathbb{E}\left[ f(x_t^{s+1}) + \left<\nabla f(x_t^{s+1}) , x_{t+1}^{s+1}-x_t^{s+1} \right> + \frac{L}{2} ||x_{t+1}^{s+1} - x_t^{s+1}||^2 \right] \nonumber \\
&= \mathbb{E}\left[ f(x_t^{s+1}) \right] - \eta \mathbb{E}\left[ \left< \nabla f(x_t^{s+1}),\frac{1}{b}\sum\limits_{i_t \in I_t} \nabla f(x_{t-\tau_i}^{s+1}) \right> \right] + \frac{\eta^2 L}{2} \mathbb{E}\left[ ||v_{t}^{s+1}||^2\right] \nonumber \\
&= - \frac{\eta}{2} \mathbb{E}\left[  ||\nabla f(x_t^{s+1})||^2 +  ||\frac{1}{b}\sum\limits_{i_t \in I_t} \nabla f(x_{t-\tau_i}^{s+1})||^2 -  ||\nabla f(x_t^{s+1}) - \frac{1}{b}\sum\limits_{i_t \in I_t} \nabla f(x_{t-\tau_i}^{s+1})||^2   \right]  \nonumber  \\
& + \mathbb{E}\left[ f(x_t^{s+1}) \right] +  \frac{\eta^2 L}{2} \mathbb{E}\left[ ||v_{t}^{s+1}||^2\right]
\end{align}
where the first inequality follows from Lipschitz continuity of $f(x)$.

 \begin{eqnarray}
 ||\nabla f(x_t^{s+1}) - \frac{1}{b}\sum\limits_{i_t \in I_t} \nabla f(x_{t-\tau_i}^{s+1})||^2 &\leq&  \frac{1}{b}\sum\limits_{i_t \in I_t} || \nabla f(x_t^{s+1}) - \nabla f(x_{t-\tau_i}^{s+1})||^2 \nonumber \\
& \leq& \frac{L^2}{b}\sum\limits_{i_t \in I_t} ||  x_t^{s+1} -  x_{t-\tau_i}^{s+1}||^2 \nonumber \\
& = &  \frac{L^2}{b}\sum\limits_{i_t \in I_t} ||  \sum\limits_{j=t-\tau_i}^{t-1} (x_{j}^{s+1} -  x_{j+1}^{s+1})||^2 \nonumber \\
&\leq & \frac{L^2 \Delta}{b}\sum\limits_{i_t \in I_t}  \sum\limits_{j=t-\tau_i}^{t-1}  || x_{j}^{s+1} -  x_{j+1}^{s+1}||^2 \nonumber \\
& = &  \frac{L^2 \Delta \eta^2}{b}\sum\limits_{i_t \in I_t}  \sum\limits_{j=t-\tau_i}^{t-1}  ||v_j^{s+1}||^2
 \end{eqnarray}
where the second inequality follows from  Lipschitz continuity of $f(x)$.
$\Delta$ denotes the upper bound of  time delay. $\tau \leq \Delta$. Above all, we have the following inequality,
\begin{eqnarray}
\mathbb{E}\left[ f(x_{t+1}^{s+1}) \right]  \leq \mathbb{E}\left[ f(x_t^{s+1}) \right] - \frac{\eta}{2} \mathbb{E}\left[
||\nabla f(x_t^{s+1})||^2 \right] -  \frac{\eta}{2} \mathbb{E}\left[  ||\frac{1}{b}\sum\limits_{i \in I_t} \nabla f(x_{t-\tau_i}^{s+1})||^2 \right]  \nonumber \\
+ \frac{\eta^2L}{2} \mathbb{E}\left[ ||v_{t}^{s+1}||^2 \right]  + \frac{L^2 \Delta \eta^3}{2b}\sum\limits_{i \in I_t}  \sum\limits_{j=t-\tau_i}^{t-1} \mathbb{E}\left[ ||v_j^{s+1}||^2 \right]
\end{eqnarray}
Following the definition of $R_{t+1}^{s+1}$ in \cite{reddi2016stochastic}, 
\begin{align}
&R_{t+1}^{s+1} = \mathbb{E}\left[ f(x_{t+1}^{s+1}) + c_{t+1} ||x_{t+1}^{s+1} - \tilde{x}^s ||^2 \right] \nonumber \\
&\leq  \mathbb{E}\left[ f(x_t^{s+1}) \right] - \frac{\eta}{2} \mathbb{E}\left[  ||\nabla f(x_t^{s+1})||^2 \right] -  \frac{\eta}{2} \mathbb{E}\left[  ||\frac{1}{b}\sum\limits_{i \in I_t} \nabla f(x_{t-\tau_i}^{s+1})||^2 \right] \nonumber \\
&+  \frac{\eta^2L}{2} \mathbb{E}\left[ ||v_{t}^{s+1}||^2 \right]  + \frac{L^2 \Delta \eta^3}{2b}\sum\limits_{i \in I_t}  \sum\limits_{j=t-\tau_i}^{t-1} \mathbb{E}\left[ ||v_j^{s+1}||^2 \right] \nonumber \\
&+ c_{t+1} \left[  \eta^2 \mathbb{E} \left[ ||v_t^{s+1}||^2 \right]  + (1+\eta \beta_t)  \mathbb{E} \left[ ||x_t^{s+1} - \tilde{x}^s||^2 \right]  + \frac{\eta}{\beta_t} \mathbb{E} \left[ ||\frac{1}{b}\sum\limits_{i \in I_t} \nabla f(x_{t-\tau_i}^{s+1})||^2 \right]  \right] \nonumber \\
&=  \mathbb{E}\left[ f(x_t^{s+1}) \right]  - \frac{\eta}{2} \mathbb{E}\left[  ||\nabla f(x_t^{s+1})||^2 \right]  - (\frac{\eta}{2} - \frac{c_{t+1}\eta}{\beta_t}) \mathbb{E}\left[  ||\frac{1}{b}\sum\limits_{i \in I_t} \nabla f(x_{t-\tau_i}^{s+1})||^2 \right] \nonumber \\
&  +  \frac{L^2 \Delta \eta^3}{2b}\sum\limits_{i \in I_t}  \sum\limits_{j=t-\tau_i}^{t-1} \mathbb{E}\left[ ||v_j^{s+1}||^2 \right]   + (\frac{\eta^2L}{2} + c_{t+1}\eta^2)  \mathbb{E}\left[ ||v_{t}^{s+1}||^2 \right]    \nonumber \\
& + c_{t+1}(1+\eta\beta_t)  \mathbb{E} \left[ ||x_t^{s+1} - \tilde{x}^s||^2 \right] \nonumber \\
&\leq  \mathbb{E}\left[ f(x_t^{s+1}) \right]  - \frac{\eta}{2} \mathbb{E}\left[  ||\nabla f(x_t^{s+1})||^2 \right] +   \frac{L^2 \Delta \eta^3}{2b}\sum\limits_{i \in I_t}  \sum\limits_{j=t-\tau_i}^{t-1} \mathbb{E}\left[ ||v_j^{s+1}||^2 \right]  \nonumber \\
&   + (\frac{\eta^2L}{2} + c_{t+1}\eta^2)  \mathbb{E}\left[ ||v_{t}^{s+1}||^2 \right]   +  c_{t+1}(1+\eta\beta_t)  \mathbb{E} \left[ ||x_t^{s+1} - \tilde{x}^s||^2 \right]
\end{align}

In the final inequality, we make $ (\frac{\eta}{2} - \frac{c_{t+1}\eta}{\beta_t}) > 0$. Then we sum over $R_{t+1}^{s+1}$
\begin{align}
 &\sum\limits_{t=0}^{m-1} R_{t+1}^{s+1} \leq  \sum\limits_{t=0}^{m-1} \biggl[ \mathbb{E}\left[ f(x_t^{s+1}) \right]  - \frac{\eta}{2} \mathbb{E}\left[  ||\nabla f(x_t^{s+1})||^2 \right]  + \frac{L^2 \Delta \eta^3}{2b}\sum\limits_{i \in I_t}  \sum\limits_{j=t-\tau_i}^{t-1} \mathbb{E}\left[ ||v_j^{s+1}||^2 \right] \nonumber \\
&+ (\frac{\eta^2L}{2} + c_{t+1}\eta^2)  \mathbb{E}\left[ ||v_{t}^{s+1}||^2 \right]   +  c_{t+1}(1+\eta\beta_t)  \mathbb{E} \left[ ||x_t^{s+1} - \tilde{x}^s||^2 \right] \biggr] \nonumber \\
& \leq  \sum\limits_{t=0}^{m-1} \biggl[ \mathbb{E}\left[ f(x_t^{s+1}) \right]  - \frac{\eta}{2} \mathbb{E}\left[  ||\nabla f(x_t^{s+1})||^2 \right]   +  c_{t+1}(1+\eta\beta_t)  \mathbb{E} \left[ ||x_t^{s+1} - \tilde{x}^s||^2 \right] \nonumber   \\
& + (\frac{L^2 \Delta^2 \eta^3}{2} +  \frac{\eta^2L}{2} + c_{t+1}\eta^2)  \mathbb{E}\left[ ||v_{t}^{s+1}||^2 \right]   \biggr] \nonumber \\
 & \leq  \sum\limits_{t=0}^{m-1} \biggl[ \mathbb{E}\left[ f(x_t^{s+1}) \right]  - \frac{\eta}{2} \mathbb{E}\left[  ||\nabla f(x_t^{s+1})||^2 \right] +  c_{t+1}(1+\eta\beta_t)  \mathbb{E} \left[ ||x_t^{s+1} - \tilde{x}^s||^2 \right]  \nonumber   \\
&   +  \frac{2}{1-2L^2 \Delta^2 \eta^2}(\frac{L^2 \Delta^2 \eta^3}{2} + \frac{\eta^2L}{2} + c_{t+1}\eta^2)   \mathbb{E}\left[ ||u_{t}^{s+1}||^2 \right] \biggr]  \nonumber \\
& = \sum\limits_{t=0}^{m-1} R_{t}^{s+1} - \sum\limits_{t=0}^{m-1} \left[  \Gamma _t   \mathbb{E}\left[  ||\nabla f(x_t^{s+1})||^2 \right] \right]
\end{align}
where the last inequality follows the upper bound of $v_t^{s+1}$ in \cite{reddi2015variance}, and we define
\begin{eqnarray}
c_t =  c_{t+1}\left(1+\eta\beta_t + \frac{4L^2\eta^2}{(1-2L^2 \Delta^2 \eta^2)b}\right) + \frac{4L^2}{(1-2L^2 \Delta^2 \eta^2)b}  \left(\frac{L^2 \Delta^2 \eta^3}{2} + \frac{\eta^2L}{2}\right)
\end{eqnarray}

\begin{eqnarray}
 \Gamma _t = \frac{\eta}{2} -\frac{4}{(1-2L^2 \Delta^2 \eta^2)}  (\frac{L^2 \Delta^2 \eta^3}{2} + \frac{\eta^2L}{2} + c_{t+1}\eta^2)
\end{eqnarray}

 We set $c_{m} = 0$, and $\tilde{x}^{s+1} = x^{s+1}_{m}$, and $\gamma = \min\limits_t \Gamma_t$,  thus $R_{m}^{s+1} =  \mathbb{E}\left[  f(x_m^{s+1}) \right] =  \mathbb{E}\left[  f(\tilde x^{s+1}) \right]$, and
$R_{0}^{s+1} =  \mathbb{E}\left[  f(x_0^{s+1}) \right] =  \mathbb{E}\left[  f(\tilde x^{s}) \right]$. Summing up all epochs, the following inequality holds,
\begin{eqnarray}
\frac{1}{T}\sum\limits_{s=0}^{S-1}\sum\limits_{t=0}^{m-1} \mathbb{E}\left[  ||\nabla f(x_t^{s+1})||^2 \right]  \leq \frac{\mathbb{E}\left[  f( x^{0})  -  f( x^{*}) \right] }{T\gamma}
\end{eqnarray}

\end{proof}

\section{Proof of Theorem \ref{thm_m_2}}
\begin{proof}[Proof of Theorem \ref{thm_m_2}]
Following the proof of Theorem \ref{thm_m_1}, we let $c_m=0$, $\eta_t = \eta = \frac{u_0b}{Ln^\alpha}$, $\beta_t = \beta = 2L$,
$0<u_0<1$, and $0<\alpha<1$. We define $\theta$, and get its upper bound,

\begin{eqnarray}
\theta &=& \eta \beta + \frac{4L^2\eta^2}{(1-2L^2\Delta^2\eta^2)b} \nonumber \\
&=&  \frac{2u_0b}{n^{\alpha}} + \frac{4u_0^2b}{n^{2\alpha} - 2\Delta^2u_0^2b^2} \nonumber \\
&\leq&   \frac{6u_0b} {n^{\alpha}}
\end{eqnarray}
where we assume $n^{2\alpha} - 2\Delta^2u_0^2b^2 \geq n^{\alpha}$.
We set $m = \lfloor  \frac{n^{{\alpha}}}{6u_0b}  \rfloor$, from
the recurrence formula between $c_t$ and $c_{t+1}$, $c_0$ is upper bounded,
\begin{eqnarray}
c_0& =& \frac{2L^2}{(1-2L^2\Delta^2\eta^2)b} \left( L^2\Delta^2\eta^3 + \eta^2L \right) \frac{(1+\theta)^m - 1}{\theta} \nonumber \\
&\leq& \frac{2L \left( \frac{u_0^3\Delta^2b^3}{n^{3\alpha}} + \frac{u_0^2b^2}{n^{2\alpha}}  \right)}{ \left(1 - 2 L^2 \Delta^2\eta^2  \right)  \left(\frac{2u_0b^2}{n^{\alpha}} + \frac{4u_0^2b^2}{n^{2\alpha} - 2\Delta^2u_0^2b^2}   \right)} \left( (1+\theta)^m -1 \right) \nonumber \\
&\leq& \frac{L(u_0b\Delta^2+1)}{3} \left( (1+\theta)^m -1 \right) \nonumber \\
&\leq & \frac{L(u_0b\Delta^2+1)}{3} (e-1)
\end{eqnarray}
where the final inequality follows from that $(1+\frac{1}{l})^l$ is increasing for $l>0$, and $\lim\limits_{l\rightarrow  \infty}(1 + \frac{1}{l})^l = e$. From Theorem \ref{thm_m_1}, we know that $c_0 < \frac{\beta}{2} = L$, then $u_0b\Delta^2 < \frac{1}{2}$.
 $c_t$ is decreasing with respect to $t$, and $c_0$ is also upper bounded. Now, we can get a lower bound of $\gamma$,

\begin{eqnarray}
\gamma&=& \min_t \Gamma_t \nonumber \\
&\geq &   \frac{\eta}{2} -\frac{4}{(1-2L^2 \Delta^2 \eta^2)}  (\frac{L^2 \Delta^2 \eta^3}{2} + \frac{\eta^2L}{2} + c_{0}\eta^2)   \nonumber \\
&\geq& \frac{\eta}{2} - 4n^{\alpha}(\frac{L^2 \Delta^2 \eta^3}{2} + \frac{\eta^2L}{2} + c_{0}\eta^2) \nonumber \\
&\geq & (\frac{1}{2} - \frac{14\Delta^2u_0^2b^2 + 14u_0b}{3})\eta \nonumber \\
&\geq& \frac{\sigma b}{Ln^{\alpha}}
\end{eqnarray}
There exists a small value $\sigma$ that the final inequality holds if
$\frac{1}{2} > \frac{14\Delta^2u_0^2b^2 + 14u_0b}{3}$.
So, if $\Delta^2$ has an upper bound $ \Delta^2 < \min \{ \frac{1}{2u_0b}, \frac{3 - 28u_0b}{28u_0^2b^2}  \}
$ , we can prove the final conclusion,
\begin{eqnarray}
\frac{1}{T}\sum\limits_{s=0}^{S-1}\sum\limits_{t=0}^{m-1} \mathbb{E}\left[  ||\nabla f(x_t^{s+1})||^2 \right]  \leq \frac{Ln^{\alpha}\mathbb{E}\left[  f(\tilde x^{0})  -  f(\tilde x^{*}) \right] }{b T \sigma}
\end{eqnarray}

\end{proof}

\end{document}